\theoremstyle{definition}
\newcommand{\ie}{\emph{i.e.},\xspace}
\newcommand\figref[1]{Figure~\ref{#1}}
\newcommand\tabref[1]{Table~\ref{#1}}
\newcommand\secref[1]{Sec.~\ref{#1}}
\newcommand\equref[1]{Eq.(\ref{#1})}
\newcommand\appref[1]{Appendix~\ref{#1}}
\newcommand{\fakeparagraph}[1]{\vspace{1mm}\noindent\textbf{#1.}}
\newcommand{\sysname}{pre-determined GWT\xspace}
\newcommand{\Sysname}{Pre-determined GWT\xspace}
\begin{document}

\title{Pre-Training Identification of Graph Winning Tickets in Adaptive Spatial-Temporal Graph Neural Networks}

\author{Wenying Duan}
\email{wenyingduan@ncu.edu.cn}
\affiliation{%
  \institution{Jiangxi Provincial Key Laboratory of Intelligent Systems and Human-Machine Interaction, Nanchang University}
  \city{Nanchang}
  \country{China}
}

\author{Tianxiang Fang}
\email{6109121076@email.ncu.edu.cn}
\affiliation{%
  \institution{Nanchang University}
  \city{Nanchang}
  \country{China}
}

\author{Hong Rao}
\email{raohong@ncu.edu.cn}
\affiliation{%
 \institution{School of Software\\Nanchang University}
 \city{Nanchang}
 \country{China}
}

\author{Xiaoxi He}\authornote{Corresponding author}
\email{hexiaoxi@um.edu.mo}
\affiliation{%
  \institution{Faculty of Science and Technology\\University of Macau}
  \city{Macau}
  \country{China}
}


\begin{abstract}
In this paper, we present a novel method to significantly enhance the computational efficiency of Adaptive Spatial-Temporal Graph Neural Networks (ASTGNNs) by introducing the concept of the Graph Winning Ticket (GWT), derived from the Lottery Ticket Hypothesis (LTH). By adopting a pre-determined star topology as a GWT prior to training, we balance edge reduction with efficient information propagation, reducing computational demands while maintaining high model performance. Both the time and memory  computational complexity of generating adaptive spatial-temporal graphs is significantly reduced from $\mathcal{O}(N^2)$ to $\mathcal{O}(N)$. Our approach streamlines the ASTGNN deployment by eliminating the need for exhaustive training, pruning, and retraining cycles, and demonstrates empirically across various datasets that it is possible to achieve comparable performance to full models with substantially lower computational costs. Specifically, our approach enables training ASTGNNs on the largest scale spatial-temporal dataset using a single A6000 equipped with 48 GB of memory, overcoming the out-of-memory issue encountered during original training and even achieving state-of-the-art performance. {Furthermore, we delve into the effectiveness of the GWT from the perspective of spectral graph theory, providing substantial theoretical support.} This advancement not only proves the existence of efficient sub-networks within ASTGNNs but also broadens the applicability of the LTH in resource-constrained settings, marking a significant step forward in the field of graph neural networks. Code is available at https://anonymous.4open.science/r/paper-1430.
\end{abstract}

\begin{CCSXML}
<ccs2012>
   <concept>
       <concept_id>10010147.10010257.10010293.10010294</concept_id>
       <concept_desc>Computing methodologies~Neural networks</concept_desc>
       <concept_significance>500</concept_significance>
       </concept>
 </ccs2012>
\end{CCSXML}

\ccsdesc[500]{Computing methodologies~Neural networks}

\keywords{spatial-temporal graph neural network, lottery ticket
hypothesis, spatial-temporal data mining}

\maketitle
\section{introduction}

Spatial-Temporal Graph Neural Networks (STGNNs) have established themselves as a formidable tool for mining the hidden patterns present in spatial-temporal data, displaying remarkable proficiency in modeling spatial dependencies via graph structures \cite{bib:TNNLS20:Wu}. The construction of these spatial graphs is a pivotal aspect of STGNNs, in which the complex and implicit nature of spatial-temporal relationships has paved the way for the recently emerging self-learned methods that dynamically generate graphs to capture these intricate dependencies in a data-driven manner. Adaptive Spatial-Temporal Graph Neural Networks (ASTGNNs), a state-of-the-art approach to spatial-temporal data processing, are particularly adept at creating adaptive graphs through learnable node embeddings, as exemplified by models such as Graph WaveNet \cite{bib:IJCAI19:Wu} and AGCRN \cite{bib:NIPS20:Bai}.

Despite their advanced performance, ASTGNNs are encumbered by substantial computational overheads during both the training and inference phases, primarily due to the exhaustive calculations required for learning the adaptive adjacency matrices of complete graphs, and the computationally intensive nature of the aggregation phase. This presents a significant challenge when dealing with large-scale spatial-temporal data, where computational efficiency is paramount. Pioneering work \cite{bib:kdd23:Duan} has explored this aspect, improving the efficiency of ASTGNNs during inference via sparsification of the spatial graph. However, the sparsification of the spatial graph relies heavily on the training framework and can only be conducted after the training phase, leaving the efficiency of the training phase itself untouched.

In order to improve the efficiency of both the training and inference phases of ASTGNNs, our research introduces and explores the concept of the \textit{Graph Winning Ticket (GWT)} for the learnable spatial graphs in ASTGNNs, an extension of the Lottery Ticket Hypothesis (LTH) in the context of ASTGNN. The original LTH posits the existence of smaller, efficient sub-networks—'winning tickets'—that can match the performance of the full network with a fraction of the computational cost \cite{frankle2019lottery}. This concept has been extended to the realm of ASTGNNs, where the identification of such sub-networks within the learnable spatial graphs, i.e., GWTs, holds the potential to markedly accelerate the training and inference processes.
However, a simple adoption of the LTH in the context of ASTGNN is not sufficient for practically improving their efficiency during both training and inference phases, as the traditional method of finding winning tickets involves a compute-intensive cycle of training, pruning, and retraining.

In contrast, our work aims to streamline this process by preemptively identifying a GWT for the spatial graph in ASTGNNs. We posit that a star topology, as a spanning tree of the complete graph, serves as an effective pre-determined GWT, striking a balance between edge reduction and efficient information propagation. We argue that the effectiveness of traditional ASTGNNs is enabled by the adoption of a complete spatial graph, which has a diameter of 1 and thus allows for optimally efficient information propagation. However, by relaxing the diameter of the graph from 1 to 2, our star topology significantly trims the number of edges while still preserving the integrity of spatial-temporal communication. We empirically validate the performance of this star topology across various datasets and benchmarks, solidifying its role as a winning ticket for the spatial graphs in ASTGNNs.  

We summarize our main contributions as follows:

\begin{itemize}
    \item To the best of our knowledge, we are the first to improve the efficiency of ASTGNNs during both training and inference phases, with an emphasis on the training phase. By leveraging the concept of the Lottery Ticket Hypothesis (LTH), we posit that an efficient subgraph of ASTGNN's spatial graph can achieve comparable performance to the complete graph with significantly reduced computational overhead. We introduce a star topology as this winning ticket, which is not only sparser but also retains the essential connectivity to ensure effective information propagation across the network. This pre-determined topology obviates the need for the traditional, exhaustive search process involving training, pruning, and retraining, thereby streamlining the deployment of ASTGNNs and substantially improving their efficiency during both the training and inference phases.

    \item Our research also expands the theoretical foundation of the LTH by providing empirical evidence {and substantial theoretical support} for the existence of winning tickets in the spatial graphs of ASTGNNs. The discovery of a pre-determined winning ticket is a significant stride in the application of the LTH, as it demonstrates that such efficient sub-networks can be identified without resorting to the computationally intensive methods traditionally employed. This advance not only reaffirms the LTH within the domain of graph neural networks, but also paves the way for its practical implementation in scenarios where computational resources are limited. By circumventing the need for iterative training and pruning, our approach enhances the feasibility of adopting the LTH in real-world settings, where efficiency and scalability are critical.

    \item We trained two representative ASTGNNs (AGCRN \& Graph Wavenet) with our pre-identified GWTs on five of the largest known spatial-temporal datasets. The performance of the ASTGNNs with the GWTs can match or even surpass that of training with the full spatial graph, and its training and inference costs are drastically smaller. This provides empirical evidence for the existence of winning graph tickets in ASTGNNs, demonstrating that the GWTs identified are stable winning tickets of the spatial graphs within ASTGNNs, highlighting their scalability and superiority.


\end{itemize}
\section{Related Work}
\subsection{Spatial-Temporal Graph Neural Networks}
The analysis of spatial-temporal data necessitates an understanding of dynamic interactions within time-varying signals across spatial domains\cite{wang2024modeling, wu2024earthfarsser}. Spatial-Temporal Graph Neural Networks (STGNNs) are proficient in uncovering latent patterns in these graph-structured data \cite{bib:TNNLS20:Wu}. A key characteristic of STGNNs is their capability to model spatial dependencies among nodes, effectively learning adjacency matrices. Depending on their approach to constructing these matrices, STGNNs can be categorized into pre-defined and self-learned methods.

Pre-defined STGNNs typically employ prior knowledge to construct graphs. For example, ASTGNN \cite{guo2021learning} and STGCN \cite{bib:IJCAI18:Yu} utilize road network structures for graph creation. However, these pre-defined graphs encounter limitations due to their reliance on extensive domain knowledge and the inherent quality of the graph data. Given the implicit and complex nature of spatial-temporal relationships, self-learned methods for graph generation have gained prominence. These methods introduce innovative techniques to capture complex spatial-temporal dependencies, thereby offering significant advantages over traditional pre-defined models.

Self-learned STGNNs can be further divided into two primary categories: feature-based and randomly initialized methods. Feature-based approaches, such as PDFormer \cite{DBLP:conf/aaai/JiangHZW23} and DG \cite{DBLP:journals/isci/PengDLLJWZH21}, construct dynamic graphs from time-variant inputs, enhancing the accuracy of the model. On the other hand, randomly initialized STGNNs, also known as Adaptive Spatial-Temporal Graph Neural Networks (ASTGNNs), facilitate adaptive graph generation through randomly initialized, learnable node embeddings. Graph WaveNet \cite{bib:IJCAI19:Wu} introduced an Adaptive Graph Convolutional Network (AGCN) layer to learn a normalized adaptive adjacency matrix. AGCRN \cite{bib:NIPS20:Bai} further developed this concept with a Node Adaptive Parameter Learning enhanced AGCN (NAPL-AGCN) to discern node-specific patterns. Owing to its remarkable performance, the NAPL-AGCN model has been incorporated into various recent models \cite{DBLP:conf/aaai/Jiang0YJCK0FS23, bib:AAAI22:Choi, bib:ICLR22:Chen}.

Despite the enhanced performance of ASTGNNs, they are burdened with considerable computational overhead. This is primarily due to two factors: \textit{i)} the process of learning an adaptive adjacency matrix necessitates calculating the edge weight between each pair of nodes, and \textit{ii)} the aggregation phase of these networks is inherently computationally intensive. Our research is centered on identifying the \textit{graph winning ticket}—a concept derived from the Lottery Ticket Hypothesis—in order to accelerate training and inference in ASTGNNs. This approach is particularly relevant for handling large-scale spatial-temporal data, where efficiency is crucial.

\subsection{Lottery Ticket Hypothesis.}
The Lottery Ticket Hypothesis (LTH) suggests that within large neural networks, there exist smaller sub-networks (termed "winning tickets") that, when trained in isolation from the start, can reach a similar performance level as the original network in a comparable number of iterations. \cite{frankle2019lottery}
This finding has attracted lots of research attention as it implies the potential of training a much smaller network to reach the accuracy of a dense, much larger network without going through the time and cost-consuming pipeline of fully training the dense network, pruning and then retraining it to restore the accuracy.  The "Early Bird Lottery Ticket" concept builds on the original LTH. It suggests that winning tickets can be identified very early in the training process, much earlier than what was originally proposed in LTH. This finding could further optimize the training of neural networks by allowing significant pruning and resource reduction very early in the training phase.\cite{you2020drawing, chen2021earlybert}. Further, \cite{bib:ICML21:Chen2} generalised LTH to GNNs by iteratively applying UGS to identify graph lottery tickets.  GEBT discovers the existence of graph early-bird tickets \cite{you2022early}.  DGLT generalizes Dual Lottery Ticket Hypothesis (DLTH) to the graph to address information loss and aggregation failure issues caused by sampling-based GNN pruning algorithms \cite{wang2023searching}. However, the pruned GNNs are still hard to generalize to unseen graphs \cite{wang2023snowflake}. RGLT is proposed to find more robust and generalisable GLT to tackle this issue \cite{wang2023brave}. 

For extremely large models and graphs, identifying graph winning tickets typically necessitates a resource-intensive process involving training the network, followed by pruning and retraining. However, our methodology significantly streamlines the deployment of ASTGNNs. It achieves this by obviating the requirement for exhaustive cycles of training, pruning, and retraining.

\section{Preliminaries}
\subsection{Notations and Problem Definition}

\begin{table}[ht]
\centering
\caption{Summary of Notations}
\begin{tabular}{cl}
\toprule
\textbf{Symbol} & \textbf{Description} \\
\midrule
$\mathcal{G}$ & Undirected graph \\
$N$ & Number of nodes in the graph \\
$\mathbf{X}^{t}$ & Feature matrix at time step $t$ \\
$E$ & Learnable node embedding matrix \\
$d$ & Node embedding dimension \\
$\mathcal{V}$ & Set of nodes in a graph \\
$\mathcal{E}$ & Set of edges in a graph \\
$\mathcal{K}_N$ & Complete graph with $N$ nodes \\
$\mathcal{T}$ & Spanning tree \\
$\mathcal{T}^{\star}$ & Star Topology Spanning Tree \\
$e_c$ & Node embedding vector of the central node \\
$\Theta$ & Model parameters \\
$\mathbf{A}$ & Adjacency matrix \\
$\operatorname{GAT}$ & Graph Attention Network function \\
$\mathbf{Z}^{t}$ & Output of the model at time step $t$ \\
\bottomrule
\end{tabular}
\end{table}
Frequently used notations are summarized in Table 6. Following the conventions in spatial-temporal graph neural network researches~\cite{bib:IJCAI18:Yu, bib:AAAI18:Yan, bib:IJCAI:bai, bib:IJCAI20:Huang},
we denote the spatial-temporal data as a sequence of frames: 
$\{\mathbf{X}^{1}, \mathbf{X}^{2}, \ldots$  $, \mathbf{X}^{t}, \ldots \}$,
where a single frame $\mathbf{X}^{t}\in \mathbb{R}^{N \times D}$ 
is the $D$-dimensional data collated from $N$ different locations at time $t$.
For a chosen task time $\tau$, we aim to learn a function mapping the ${{T}_{in}}$ historical observations into the future observations in the next ${T}_{out}$ timesteps:
\begin{equation}
    \mathbf{X}^{(\tau+1): (\tau+{T}_{out})}\xleftarrow{}\mathcal{F}(\mathbf{X}^{(\tau-{T}_{in}+1): \tau})
\end{equation}

\subsection{GAT vs. AGCN}

\fakeparagraph{Graph Attention Network}
Given an undirected graph $\mathcal{G}=\{\mathcal{V},\mathcal{E}\}$, $\mathcal{V}$ is the set of nodes, $\mathcal{E}$ and $\mathbf{X} =\{x_u\}_{u=1}^{N}\in \mathbb{R}^{N\times D}$ is the corresponding set of edges and node features, respectively, where $N=|\mathcal{V}|$ is the number of nodes, $D$ is the feature dimension. The adjacent matrix can be denoted as $\mathbf{A}=[\mathbf{A}_{uv}]$, where $\mathbf{A}_{uv}=1$ if there is an edge $(u,v)\in \mathcal{E}$ and  $\mathbf{A}_{uv}=0$ otherwise. To account for the importance of neighbor nodes in learning graph structure, GAT integrates the attention
mechanism into the node aggregation operation as:
\begin{equation}\label{eq:gat}
\begin{aligned}
{z}_{u}&
=\sum_{v \in \mathcal{N}_{u}} \mathbf{A}_{u v} {x}_{v}\Theta , \\
\mathbf{A}_{u v}&=\frac{\exp (\operatorname{LeakyReLU}(s_{uv}))}{\sum_{k \in \mathcal{N}_{i}} \exp( \operatorname{LeakyReLU}(s_{u k}))},
s_{u v}=a(x_u, x_v).
\end{aligned}
\end{equation}
Here, $\Theta \in \mathbb{R}^{D\times D^{\prime}}$ is the weight matrix,  $a(\cdot, \cdot)$ is the function of computing attention scores. To simplify, we abbreviate GAT as:
\begin{equation}
\begin{aligned}
     \mathbf{Z} &= \mathbf{A}\mathbf{X}\Theta,\\
     \mathbf{A}=&\operatorname{GAT(\mathcal{G, \mathbf{X}})},
\end{aligned}
\end{equation}
where $\mathbf{Z} \in  \mathbb{R}^{N \times D^{\prime}}$, $\operatorname{GAT}(\cdot)$ is the graph attention function. 

\fakeparagraph{Adaptive Graph Convolution Network}
Adaptive Graph Convolutional Network (AGCN) facilitates adaptive learning of graph structures through randomly initialized learnable matrices. This approach lays the groundwork for the evolution of Adaptive Spatial-Temporal Graph Neural Networks (ASTGNNs). Among the notable ASTGNN models are Graph WaveNet and AGCRN. Within the Graph WaveNet framework, the AGCN is characterized as follows.
\begin{equation}\label{eq:gwn}
    \begin{aligned}
        \mathbf{Z}^{t} &=\mathbf{A}\mathbf{X}^t\Theta,\\
        \mathbf{A}=&\operatorname{Softmax}(\operatorname{ReLU}(E_{1}E_{2}^{\top}),
    \end{aligned}
\end{equation}
where $E_{1}\in \mathbb{R}^{N\times d}$ and $E_{2}\in \mathbb{R}^{N\times d}$ are the source node embeddings and target node embeddings, respectively.
While in AGCRN, AGCN is defined as:
\begin{equation}\label{eq:agcrn}
    \begin{aligned}
         \mathbf{Z}^{t} &=\mathbf{A}\mathbf{X}^t\Theta,\\
        \mathbf{A}=& \operatorname{Softmax}(\operatorname{ReLU}(EE^{\top})),
    \end{aligned}
\end{equation}
where $E\in \mathbb{R}^{N\times d}$ is the node embeddings.
\equref{eq:gwn} and \equref{eq:agcrn} are extremely similar in form, with \equref{eq:agcrn} being more concise. Therefore, the general form of AGCN referred to \equref{eq:agcrn} in this paper.
Upon close observation of \equref{eq:agcrn}, it is not difficult to find that AGCN can be reformulated as the following mathematical expression likes GAT:
\begin{equation}
\begin{aligned}
     z^{t}_u &=\sum_{v \in \mathcal{V}} \mathbf{A}_{u v} x^{t}_{v}\Theta,\\
\mathbf{A}_{u v} &=\frac{\exp (\operatorname{ReLU}\left(s_{u v}\right))}{\sum_{k \in \mathcal{V}} \exp(\operatorname{ReLU} \left(s_{u k}\right))},
s_{u v}={e}_{u}{e}^{\top}_{v},
\end{aligned}
\end{equation}
which is similar to \equref{eq:gat}. Thus, AGCN can be considered a special kind of graph attention network on a complete graph with self-loops. We further abbreviate AGCN as the following equations:
\begin{equation}\label{eq:aggat}
    \begin{aligned}
        Z_t &= \mathbf{A}\mathbf{X}_t\Theta,\\
        \mathbf{A}&=\operatorname{GAT}(\mathcal{\tilde{K}}_N,E),
    \end{aligned}
\end{equation}
where  $\mathcal{\tilde{K}}_N$ is the $N$-order complete graph $\mathcal{K}_{N}$ with self-loops. As the diameter of $\mathcal{K}_{N}$ is 1,  
AGCN facilitates the aggregation of information from all nodes to each individual node within $\mathcal{\tilde{K}}_N$. This characteristic significantly enhances the network's capability to model global spatial dependencies, culminating in its state-of-the-art performance in relevant tasks, as documented in \cite{bib:NIPS20:Bai,bib:ICLR22:Chen}.

The model utilizing multi-layers of AGCN for modeling spatial dependencies is designated as ASTGNN (Adaptive Spatio-Temporal Graph Neural Network). The spatial-temporal forecasting problem when addressed using ASTGNN is mathematically expressed as:
\begin{equation}
    \mathbf{X}^{(\tau+1): (\tau+{T}_{out})}\xleftarrow{}\mathcal{F}(\mathbf{X}^{(\tau-{T}_{in}+1): \tau};\theta, \mathcal{\tilde{K}}_N),
\end{equation}
In this formulation, $\mathcal{F}$ represents the forecasting function of ASTGNN parameterised by $\theta$,  which predicts future values $\mathbf{X}^{(\tau+1): (\tau+{T}_{out})}$ based on the input sequence $\mathbf{X}^{(\tau-{T}_{in}+1): \tau}$ and the structural information encoded in the graph $\mathcal{\tilde{K}}_N$.

However, a notable limitation arises during the training phase. The computational complexity associated with calculating adjacency matrices and executing graph convolution operations on complete graphs is of  $\mathcal{O}(N^2)$. This significant computational demand imposes a constraint on the model's scalability, particularly in scenarios involving large spatial-temporal datasets, where reducing computational complexity is crucial for practical applicability.

\subsection{Graph Tickets Hypothesis}
The Graph Tickets Hypothesis represents an extension of the original Lottery Tickets Hypothesis, initially introduced by UGS \cite{bib:ICML21:Chen2}. UGS demonstrated that GWT (\ie compact sub-graphs) are present within randomly initialized GNNs, which can be retrained to achieve performance comparable to, or even surpassing, that of GNNs trained on the original full graph. This finding underscores the potential for efficiency improvements in GNN training methodologies. However, designing a graph pruning method to identify GWTs in ASTGNNs proves to be a nontrivial task. The state-of-the art, AGS demonstrates that spatial graphs in ASTGNNs can undergo sparsification up to 99.5\% with no detrimental impact on test accuracy \cite{bib:kdd23:Duan}. Nonetheless, this robustness to sparsification does not hold uniformly; when ASTGNNs, sparsified beyond 99\%, are reinitialized and retrained on the same dataset, there is a notable and consistent decline in accuracy. This dichotomy underscores the nuanced complexity inherent in finding winning graph tickets in ASTGNNs and calls for further investigation.


\section{Method}

\subsection{Pre-Identifying the Graph Winning  Ticket}
Our objective is to identify a sparse subgraph of the spatial graph pre-training and to train ASTGNNs efficiently on this subgraph without compromising performance. An  ASTGNN with a spatial graph $\mathcal{\hat{G}}$, equipped with $K$-layer AGCN can be formulated as:

\begin{equation}\label{eq:astgnn}
\begin{aligned}
    \mathbf{Z}^{t} =&\ \underbrace{\mathbf{\hat{A}}\cdots (\mathbf{\hat{A}}(\mathbf{\hat{A}}\mathbf{X}^t\Theta_{1})\Theta_{2})\cdots\Theta_{K}},\\
   \mathbf{\hat{A}} &= \operatorname{GAT}(\mathcal{\hat{G}},E),
\end{aligned}
\end{equation}
where $\mathcal{\hat{G}}$ is a sparse subgraph of $\mathcal{{K}}_{N}$. However, employing $\mathcal{\hat{G}}$ alone does not ensure the capability to model global spatial dependencies. To maintain the global spatial modeling ability of AGCN and to train ASTGNNs efficiently, we argue that it is essential to use a spanning tree $\mathcal{T}$ of $\mathcal{\tilde{K}}_{N}$, instead of $\mathcal{\tilde{K}}_{N}$, with a sufficient $K$:

\begin{equation}\label{eq:k-hop}
\begin{aligned}
    \mathbf{Z}^{t} =&\ \underbrace{\mathbf{\tilde{A}}\cdots (\mathbf{\tilde{A}}(\mathbf{\tilde{A}}}_{K} \mathbf{X}^t\Theta_{1})\Theta_{2})\cdots\Theta_{K},\\
   \mathbf{\tilde{A}} &= \operatorname{GAT}(\mathcal{T},E),
\end{aligned}
\end{equation}

To mitigate the risk of excessive parameters and overfitting due to a high number of network layers, it is crucial to minimize $r$ as much as possible.
In light of this, we found that star topology spanning trees (with diameter $r=2$) can function as GWTs. We make two notes on the star topology spanning tree:
\begin{enumerate}
    \item \textbf{Motivation:} GAT is a message-passing network, and AGCN can be viewed as modeling a fully connected GAT, allows any node to communicate globally. A spanning tree $\mathcal{T}_N$, as the minimum connected graph of complete graph, can achieve message passing to all other nodes in the graph by stacking $k$ GAT layers, where k is the diameter of $\mathcal{T}_N$. Our goal is to minimize the computational complexity of ASTGNNs, so it's necessary to minimize $k$. Clearly, $\mathcal{T}_N$ with a diameter of 1 doesn't exist. So we start with $k=2$ to examine the existence of spanning trees and we found that there exist $\mathcal{T}_N$ with a diameter of 2, uniquely forming a star topology. We'll detail this motivation in final version.
    \item \textbf{Theoretical Analysis:} Based on spectral graph theory, if one graph is a $\sigma$-$approximation$ of another, they have similar eigensystems and properties. We can prove that $\mathcal{T}_N$ is an N-approximation of $\mathcal{K}_N$. So $\mathcal{K}_N$ and $\mathcal{T}_N$ have similar properties, allowing $\mathcal{T}_N$ with fewer edges to effectively replace $\mathcal{K}_N$ for learning good representations. The complete proof can be found in \appref{append:graph}.
\end{enumerate}

\fakeparagraph{Hypothesis 1}
Given an N-order complete spatial graph $\mathcal{{K}}_{N}$ of an ASTGNN, we investigate an associated star spanning tree: $\mathcal{T}^{\star} = \left\{\mathcal{V},\mathcal{E}^{\star}\right\} $, where $\mathcal{E}^{\star} = \{ ( u_{c},v)  \mid v\in \mathcal{V} \setminus \left \{ u_{c} \right \} \}$, with $u_{c}$ designated as the central node, $v$ designated as the leaf node. All such $\mathcal{T}^\star$ are Graph Winning Ticket (GWT) for the spatial graph of the corresponding ASTGNN.

To ensure the existence of the associated star spanning tree, we have the following proposition:

\begin{proposition}
In an N-order complete graph $\mathcal{{K}}_{N}$, there exists a graph $\mathcal{T}$ such that $\mathcal{T}$ is a spanning tree of $\mathcal{{K}}_{N}$ and the diameter of $\mathcal{T}$ is 2, and the topology of $\mathcal{T}$ unequivocally satisfies definition of star spanning tree in Hypothesis 1.
\end{proposition}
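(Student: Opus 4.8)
The plan is to prove the two assertions in the proposition separately: first that some spanning tree of $\mathcal{K}_N$ with diameter exactly $2$ exists, and second that \emph{every} spanning tree of $\mathcal{K}_N$ with diameter $2$ is necessarily a star in the sense of Hypothesis 1. I will assume $N \geq 3$; for $N \leq 2$ any spanning tree of $\mathcal{K}_N$ has diameter at most $1$, so the intended reading is $N \geq 3$.

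For existence I would just exhibit the construction. Fix any node $u_c \in \mathcal{V}$ and set $\mathcal{T} = (\mathcal{V}, \{(u_c,v) : v \in \mathcal{V}\setminus\{u_c\}\})$. Every edge of $\mathcal{T}$ is an edge of $\mathcal{K}_N$, $\mathcal{T}$ is connected, and it has exactly $N-1$ edges, so it is a spanning tree of $\mathcal{K}_N$. For $N \geq 3$ it has diameter exactly $2$: $d_{\mathcal{T}}(u_c,v) = 1$ for every leaf $v$, while $d_{\mathcal{T}}(v,v') = 2$ for any two distinct leaves $v,v'$. By construction its edge set is exactly the $\mathcal{E}^{\star}$ of Hypothesis 1 with central node $u_c$.

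For the structural part I would show that any tree $\mathcal{T}$ of diameter $2$ is a star. Take a path $v_0 - v_1 - v_2$ realizing the diameter (a longest path in the tree, which has length $2$). The first key step: $v_0$ and $v_2$ are leaves whose only neighbor is $v_1$, because if $v_0$ had a neighbor $z \neq v_1$ then $z - v_0 - v_1 - v_2$ would be a simple path of length $3$ (using acyclicity of the tree to rule out vertex coincidences), contradicting diameter $2$; symmetrically for $v_2$. The second key step: for any vertex $z \notin \{v_0,v_1,v_2\}$, the unique $z$--$v_0$ path in the tree must pass through $v_1$ (since $v_1$ is the only neighbor of $v_0$), hence $d_{\mathcal{T}}(z,v_0) = d_{\mathcal{T}}(z,v_1) + 1$; diameter $2$ then forces $d_{\mathcal{T}}(z,v_1) \leq 1$, i.e.\ $z$ is adjacent to $v_1$. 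Thus $v_1$ is adjacent to all $N-1$ other vertices; since a tree on $N$ vertices has exactly $N-1$ edges, these are all the edges of $\mathcal{T}$, so $\mathcal{T}$ is exactly the star centered at $v_1$, which matches Hypothesis 1 with $u_c = v_1$.

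I do not anticipate a real obstacle, since this is elementary graph theory; the only points needing care are the degenerate small-$N$ cases and making the ``diameter $2$ pins every vertex to a common center'' step airtight via uniqueness of paths in a tree together with the edge count $|\mathcal{E}(\mathcal{T})| = N-1$. A slicker alternative for the structural step is to note that a tree of even diameter $2$ has radius $1$, hence possesses a center vertex of eccentricity $1$ --- i.e.\ one adjacent to all other vertices --- and then conclude via the same edge-count argument.
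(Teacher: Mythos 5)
Your proof is correct, but it takes a genuinely different route from the paper. The paper argues by induction on $N$: the base case $N=3$ is checked directly (all spanning trees of $\mathcal{K}_3$ are stars of diameter $2$), and the inductive step considers the two ways a new node can be attached to the star $\mathcal{T}^{\star}_{k-1}$ --- to the center (diameter stays $2$, star structure preserved) or to a leaf (paths of length $>2$ appear) --- with the case analysis carried largely by figures. You instead give a direct existence construction (exhibit the star, count edges, compute distances) and a direct structural argument that \emph{any} tree of diameter $2$ must be a star, via a diametral path $v_0$--$v_1$--$v_2$, the observation that $v_0,v_2$ are leaves, and the forcing of every other vertex to be adjacent to $v_1$. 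Your version is arguably tighter: the paper's induction implicitly presumes that every diameter-$2$ spanning tree of $\mathcal{K}_k$ arises by adding a leaf to a diameter-$2$ spanning tree of $\mathcal{K}_{k-1}$, a reduction that is itself left unjustified, whereas your longest-path argument characterizes all diameter-$2$ trees in one pass without any inductive hypothesis. What the paper's induction buys in exchange is a constructive picture of how the family of stars grows with $N$ (and an implicit count of them), which your argument does not provide but the proposition does not require.
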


\begin{figure}[t]   
  \centering            
  \subfloat[A complete graph $\mathcal{{K}}_{N}$.]
  {
      \label{fig:com}\includegraphics[width=0.18\textwidth]{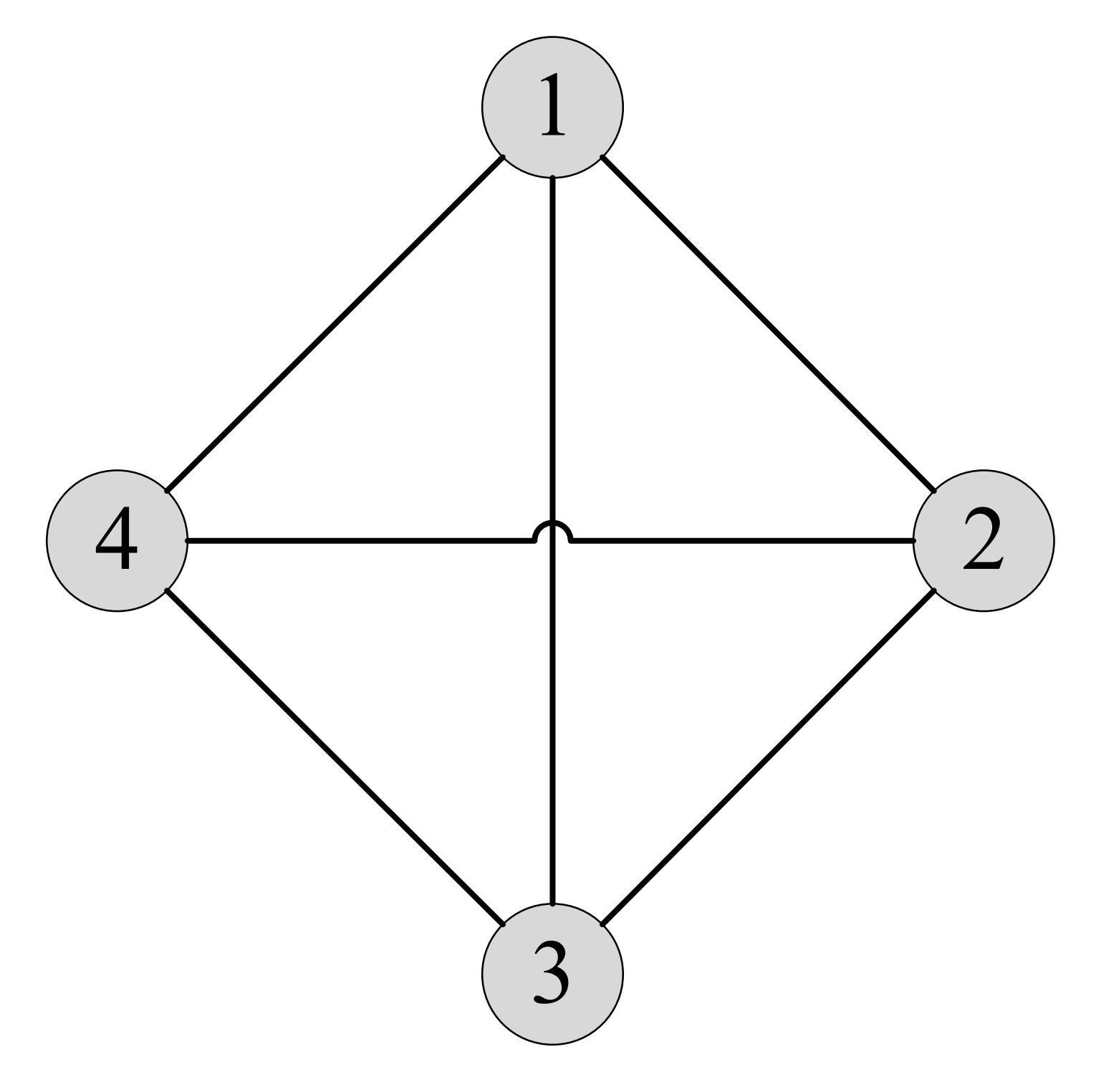}
  }
  \subfloat[A spanning tree of $\mathcal{{K}}_{N}$ with a diameter of 2, identified as a Graph Winning Ticket]
  {
      \label{fig:star}\includegraphics[width=0.18\textwidth]{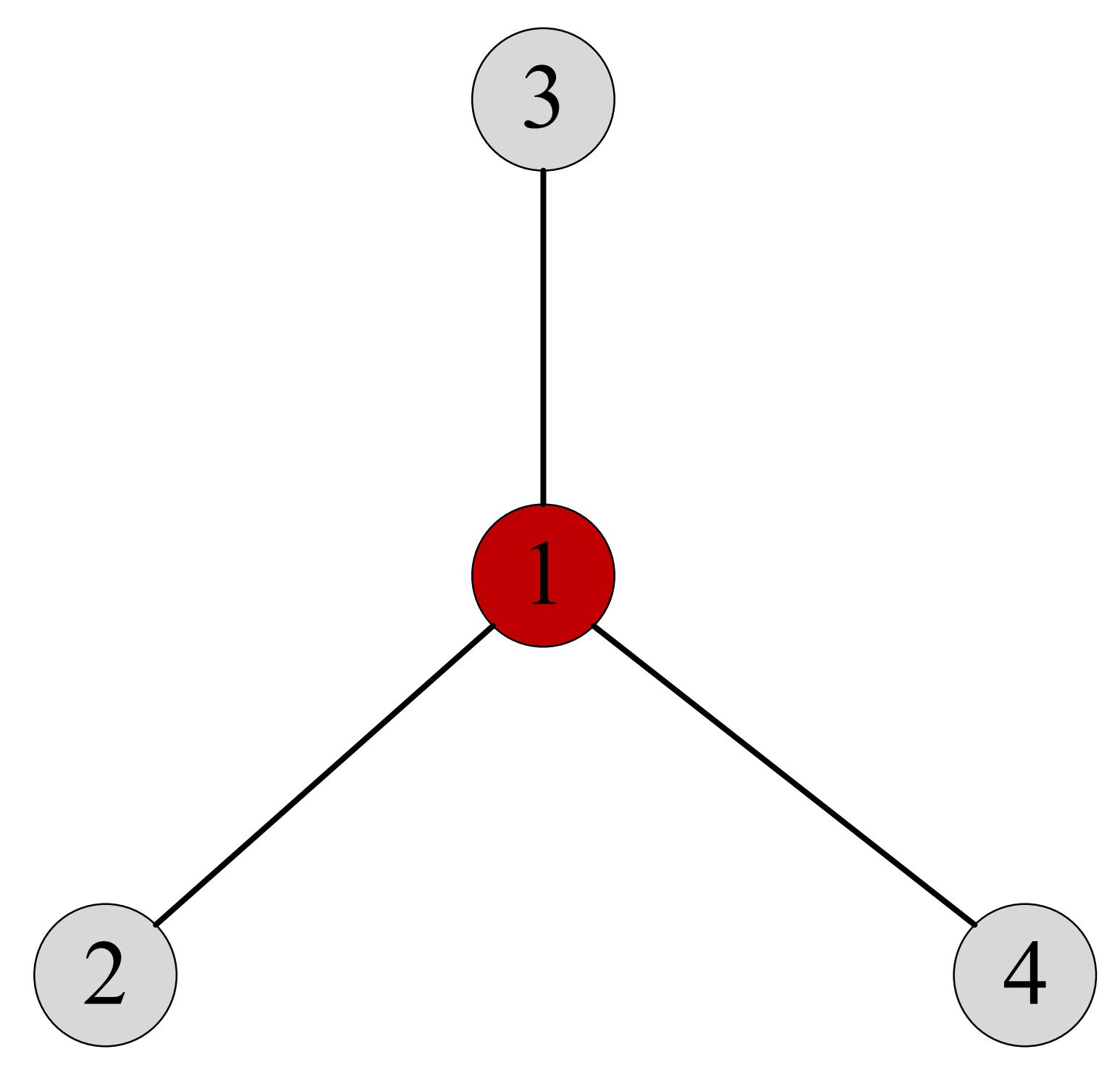}
  }
   \caption{A complete graph and a star spanning tree with a pre-specified node number.}    
  \label{fig:cvs}   
\end{figure}

The proof of Proposition 1 is given in \appref{append:proof}.
To verify Hypothesis 1, we provide empirical evidence demonstrating that such $\mathcal{T}^\star$ are GWTs for their corresponding ASTGNNS in \secref{sec:eval}.
 
 
\fakeparagraph{Sparsity of $\mathcal{T}^\star$} The sparsity of the $\mathcal{T}^\star$ is quantified as $1-\frac{2}{N}$. This represents a significant level of sparsity, particularly as the number of nodes $N$ increases. In such cases, the sparsity becomes increasingly pronounced, highlighting the efficiency of these GWTs in large-scale spatial-temporal datasets.

\subsection{Further Enhancements}\label{sec:eff}
In this section, we discuss two additional enhancements made to training ASTGNNs within $\mathcal{T}^\star$.
In the context of an ASTGNN $\mathcal{F}(\cdot;\theta, \mathcal{\tilde{K}}_{N})$, which comprises multiple AGCN layers, a straightforward approach might involve substituting $\mathcal{\tilde{K}}_{N}$ with $\mathcal{T}^{\star}$ to facilitate rapid training. However, this seemingly intuitive method encounters two primary issues:

\begin{itemize}
\item Efficiency: The method lacks optimal efficiency in training.
\item Central Node Selection: The random selection of the central node $v_c$ could lead to sub-optimal performance.
\end{itemize}

\fakeparagraph{Efficiency} The computational complexity of Graph Neural Network (GNN) training and inference encompasses two primary components: Deep Neural Network (DNN) computation and Graph convolution operation.  Considering the relaxation of the graph's diameter from 1 to 2,
 an ASTGNN necessitates a minimum of two layers of AGCN to maintain comprehensive spatial-temporal communication:
\begin{equation}
\begin{aligned}
    \mathbf{Z}^{t} &=\mathbf{{A^{\star}}}(\mathbf{A^{\star}}\mathbf{X^{t}}\Theta_{1})\Theta_{2},\\
     \mathbf{{A^{\star}}} &=\operatorname{GAT}(\mathcal{T}^{\star},E),
\end{aligned}\label{eq:2layer}
\end{equation}
To ameliorate the computational complexity of Equation \eqref{eq:2layer} in terms of DNN computation, we introduce a streamlined formulation by excluding the parameter $\Theta_{2}$. This modification facilitates 2-hop message passing within a singular AGCN layer, thereby providing the ability to model the global spatial dependencies:
\begin{equation}
\begin{aligned}
    \mathbf{Z}^{t} &=\mathbf{{A^{\star}}}(\mathbf{A^{\star}}\mathbf{X}^{t}\Theta),\\
     \mathbf{{A^{\star}}} &=\operatorname{GAT}(\mathcal{T}^{\star},E),
\end{aligned}\label{eq:aaa}
\end{equation}
From the perspective of graph convolution operations, \eqref{eq:2layer} exhibits informational redundancy in its message-passing process. The message-passing trajectory delineated in \ref{fig:mp} reveals that the paths $u_{c} \rightarrow v$ and $v \rightarrow u_{c}$ are executed twice, engendering superfluous aggregation. Such redundancy could potentially impede the model's efficiency.
\begin{figure}[t]   
  \centering            
  \subfloat[Message passing path of central node's feature.]
  {
      \label{fig:center}\includegraphics[width=0.17\textwidth]{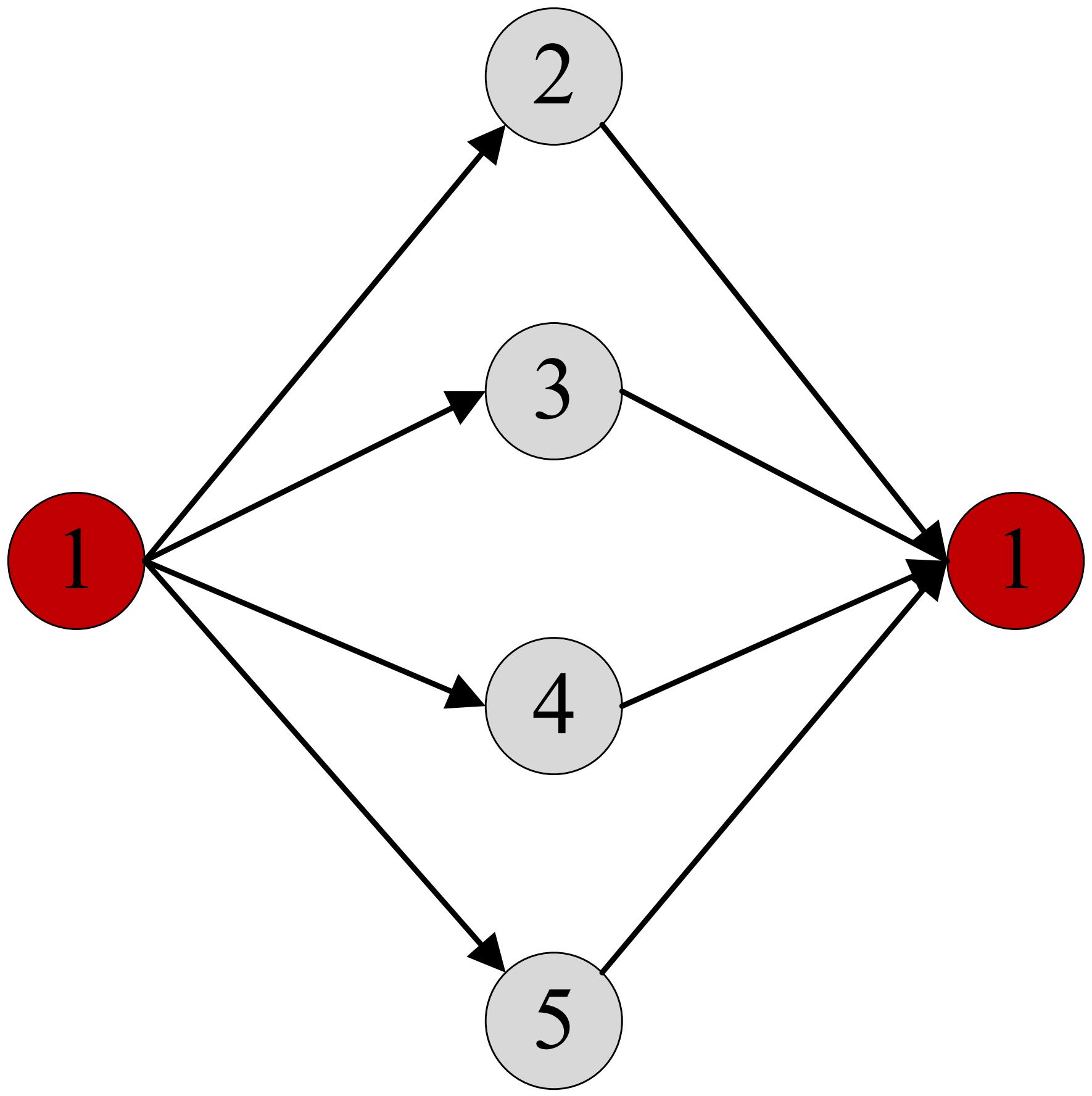}
  }
  \subfloat[Message passing path of leaf node's feature.]
  {
      \label{fig:leaf}\includegraphics[width=0.17\textwidth]{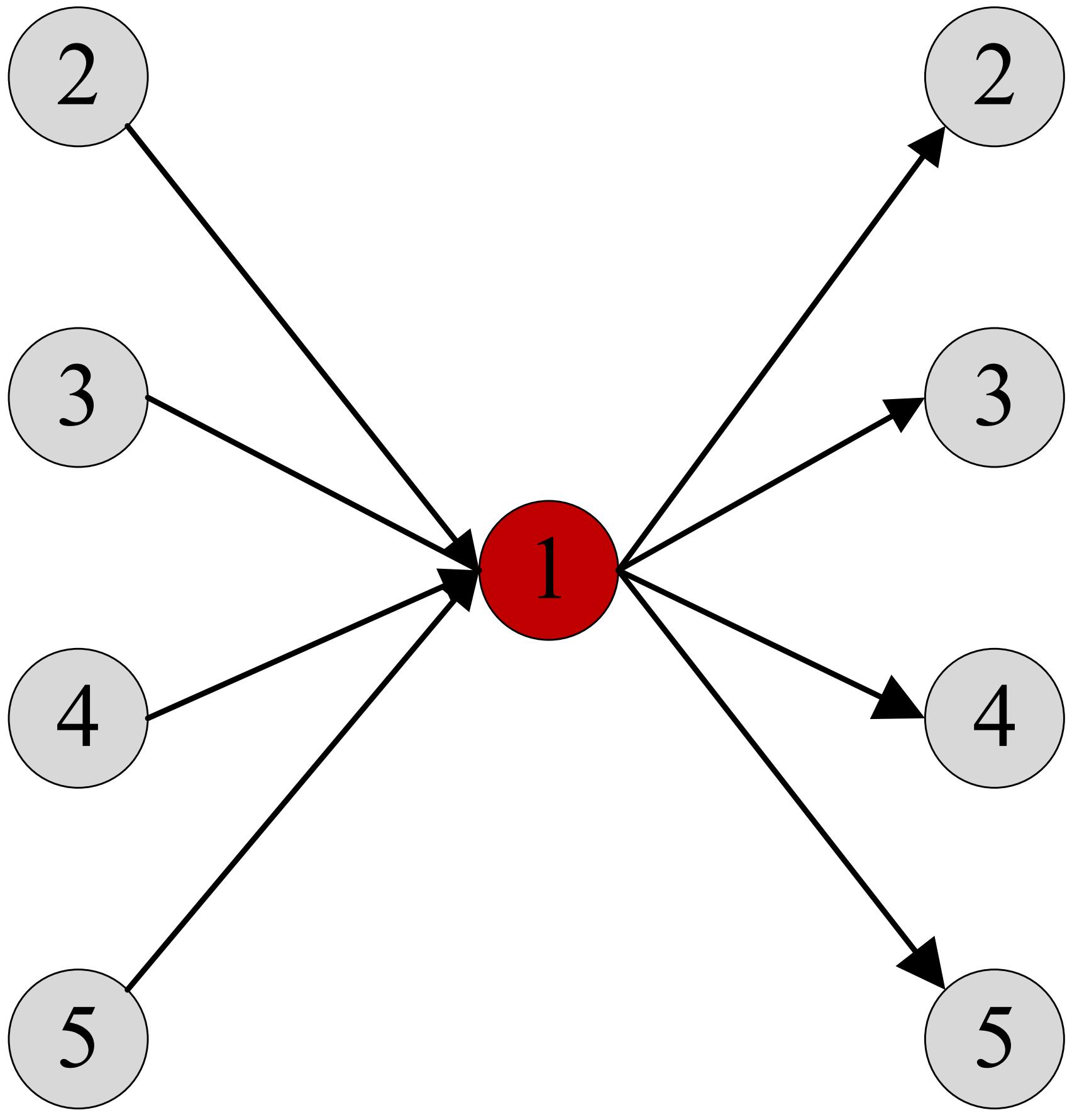}
  }
   \caption{2-hop message passing path of $\mathcal{T}^\star$ with pre-specified node numbers. The red node is the central node $u_v$ and the gray nodes are leaf nodes  $v\in \{ \mathcal{V} \setminus \left \{ u_{c} \right \} \}$.}    
  \label{fig:mp}   
\end{figure}
We therefore perform message passing as illustrated in \figref{fig:leaf} using two directed graphs, denoted as $\overleftarrow{\mathcal{T}^{\star}}$ and $\overrightarrow{\mathcal{T}^{\star}}$. This process can be expressed by the following equations:
\begin{equation}\label{eq:ul}
\begin{aligned}
    \mathbf{Z}^{t} &=\mathbf{U}^{\star}(\mathbf{L}^{\star}\mathbf{X}^{t}\Theta),\\
    \mathbf{L}^{\star}& =\operatorname{GAT}(\overleftarrow{\mathcal{T}^{\star}}, E),\\
    \mathbf{U}^{\star}& =\operatorname{GAT}(\overrightarrow{\mathcal{T}^{\star}}, E),
\end{aligned}
\end{equation}
here, $\overleftarrow{\mathcal{T}^{\star}}= \{\mathcal{V},\overleftarrow{\mathcal{E}} \}$, where $\overleftarrow{\mathcal{E}} ={<v, u_{c}> \mid v\in \mathcal{V}\setminus \{u_{c}\}}$. $\overrightarrow{\mathcal{T}^{\star}}= \{\mathcal{V},\overrightarrow{\mathcal{E}} \}$, where $\overrightarrow{\mathcal{E}} ={<u_{c}, v> \mid v\in \mathcal{V}\setminus \{u_{c}\}}$. 
The computational complexity of graph convolution operations experiences a notable reduction in \equref{eq:ul}. 
To elaborate, the complexity in \equref{eq:2layer} is $\mathcal{O}(2N)$, whereas it is diminished to 
$\mathcal{O}(N)$ in \equref{eq:ul}. 

Despite this enhancement, \equref{eq:2layer} still faces limitations in terms of hardware compatibility. At the hardware level, graph convolution operations are intrinsically linked to the sparse and irregular nature of graph structures. This characteristic might not be compatible with certain hardware architectures, leading to an increased frequency of random memory accesses and limited opportunities for data reuse. Consequently, this can result in significantly higher inference latency for graph convolutions when compared to other neural network architectures.
Then, we introduce a self-loop to the central node $u_{c}$ in both $\overleftarrow{\mathcal{T}^{\star}}$ and $\overrightarrow{\mathcal{T}^{\star}}$. Consequently, we reformulate \equref{eq:ul} to a network namely GWT-AGCN as follows:
\begin{equation}\label{eq:aagcn}
    \mathbf{Z}^{t} = \operatorname{Softmax}\left(\operatorname{ReLU}\left(E e_{c}^{\top}\right)\right) \operatorname{Softmax}\left(\operatorname{ReLU}\left(e_{c} E^{\top}\right)\right) \mathbf{X}^{t}\Theta
\end{equation}
Here, $e_{c} \in \mathbb{R}^{1 \times d}$ represents the node embedding vector of node $u_{c}$. This  GWT-AGCN  layer can serve as an alternative to the AGCN layer in constructing ASTGNNs.


The advantages of \equref{eq:aagcn} are manifold: The equation solely comprises matrix multiplication and standard activation functions, thereby enhancing its compatibility with hardware. In contrast to \equref{eq:ul},
the complexity increased by only $\mathcal{O}(2)$, a change that can be considered inconsequential.

\fakeparagraph{Central Node Selection}
Owing to the non-uniqueness of $\mathcal{T}^{\star}$ in the complete graph $\mathcal{K}_{N}$, directly employing $\mathcal{T}^{\star}$ for training ASTGNNs presents the challenge of central node selection. Viewed through the lens of AGCN, the random selection of a node $u_{c}$ from the vertex set $\mathcal{V}$ is analogous to initializing the node embedding $e_{c}$ randomly.
This approach, however, might introduce bias in the construction of the adaptive graph. To ensure that the selected central node embedding vector $e_{c}$ is positioned at the physical center of the node embedding space $E$, we opt for a setting where $e_{c} = \operatorname{Mean}(E)$, a technique we refer to as averaged initialization. We empirically show that such operation provides better on the prediction accuracy (see \secref{sec:anal}).

\section{EVALUATION}\label{sec:eval}
\begin{table}[t]
  \centering
  \caption{Spatial-temporal datasets statistics.}
    \begin{tabular}{llll}
    \Xhline{1pt}
     \textbf{Dataset} & \textbf{\#Nodes} & \textbf{Time Range} \\
    \hline
    \hline
    PEMS07 & 883   & 05/01/2017-08/06/2017 \\
          CA   & 8,600 & 01/01/2017-12/31/2021 \\
          GLA   & 3,834 & 01/01/2019-12/31/2019 \\
          GBA   & 2,352 & 01/01/2019-12/31/2019 \\
          SD    & 716   & 01/01/2019-12/31/2019 \\
    \Xhline{1pt}
    \end{tabular}
  \label{tab:datasets}
\end{table}
\begin{table*}[htbp]
  \centering
  \small
  \caption{ Performance comparisons. We bold the best results. Standard deviations are suppressed for the sake of room.}\label{tab:results}
    \begin{tabular}{cccccccccccccc}
    \Xhline{1pt}
    \multirow{2}[4]{*}{Data} & \multirow{2}[4]{*}{Method} 
    & \multicolumn{3}{c}{Horizon3} & \multicolumn{3}{c}{Horizon6} & \multicolumn{3}{c}{Horizon12} & \multicolumn{3}{c}{Average} \\
\cmidrule{3-14}          &             & MAE   & RMSE  & \multicolumn{1}{c|}{MAPE} & MAE   & RMSE  & \multicolumn{1}{c|}{MAPE} & MAE   & RMSE  & \multicolumn{1}{c|}{MAPE} & MAE   & RMSE  & MAPE \\
    \hline
    \hline
    \multirow{4}[2]{*}{PEMS07}
        &AGCRN&19.31&31.68&8.18\%&20.70&34.52&8.66\%&22.74&37.94&9.71\%&20.64&34.39&8.74\%\\
        &AGCRN$^\star$&19.36&\textbf{31.56}&8.11\%&20.67&\textbf{33.95}&8.63\%&22.76&\textbf{37.32}&9.59\%&20.67&\textbf{33.95}& 8.67\%\\
        &AGCRN$^\ast$&\textbf{19.27}&31.83& \textbf{8.09}\%&\textbf{20.57}&34.41&\textbf{8.57}\%&\textbf{22.63}&37.97&\textbf{9.39}\%&\textbf{20.57}&34.42&\textbf{8.59}\%\\
         \cline{2-14}
          &GWNET&18.69&30.69&8.02\%&20.26&33.37&\textbf{8.56}\%&22.79&37.11&9.73\%&20.25&33.32&8.63\%\\
          &GWNET$^\star$&18.64&30.61&\textbf{8.01}\%&20.36&33.57&8.68\%&22.39&36.61&\textbf{9.51}\%&20.09&33.13&\textbf{8.59}\%\\
         &GWNET$^\ast$&\textbf{18.48}&\textbf{30.42}&8.20\%&\textbf{19.91}&\textbf{32.98}&8.71\%&\textbf{21.81}&\textbf{36.21}&9.72\% &\textbf{19.80}&\textbf{32.84}&8.62\%\\

    \hline
    \hline
    \multirow{6}[2]{*}{SD}
         & AGCRN&15.71&27.85&11.48\%&\textbf{18.06}&31.51&13.06\%&\textbf{21.86}&39.44&16.52\%&\textbf{18.09}&32.01 &13.28\%\\
        &AGCRN$^\star$&16.06&28.56&11.59\%&18.32&31.65&12.41\%       &22.67&39.06&16.13\%&18.56&31.89&13.19\%\\
        &AGCRN$^\ast$&\textbf{15.49}&\textbf{25.89}&\textbf{10.55}\%&18.12&\textbf{30.96}&\textbf{12.13\%}       &22.24&\textbf{38.79}&\textbf{15.36\%}&18.13&\textbf{30.92}&\textbf{12.41\%}\\
    \cline{2-14}    &GWNET&15.24&25.13&\textbf{9.86}\%&17.74&29.51&\textbf{11.70}\%&\textbf{21.56}&36.82&\textbf{15.13}\%&\textbf{17.74}&29.62 &\textbf{11.88}\%\\
    &GWNET$^\star$&\textbf{15.19}&\textbf{24.97}&10.09\%&\textbf{17.39}&\textbf{29.01}&11.82\%&21.61&\textbf{36.55}&15.35\%&17.90&\textbf{29.56}&13.01\%\\
    &GWNET$^\ast$&15.24&25.07&10.71\%&17.87&29.18&12.21\%&21.94&36.69&15.17\%&17.89&30.06&12.57\%\\
    \hline
    \hline
    \multirow{6}[2]{*}{GBA} 
        
    &AGCRN&18.31&30.24&14.27\%&21.27&34.72&16.89\%&24.85&40.18&20.80\%&21.01&34.25 &16.90\%\\

    &AGCRN$^\star$&18.24&30.18&14.09\%&21.27&34.37&16.77\%&24.82&\textbf{39.68}&20.80\%&20.71&33.75 &15.93\%\\
    &AGCRN$^\ast$&\textbf{17.62}&\textbf{29.49}&\textbf{12.99}\%&\textbf{20.73}&\textbf{34.25}&\textbf{15.65}\%&\textbf{24.72}&40.37&\textbf{18.72}\%&\textbf{20.51}&\textbf{33.57}&\textbf{15.33}\%\\
          \cline{2-14} 
    &GWNET&17.85&29.12&13.92\%&21.11&33.69&17.79\%&25.58&\textbf{40.19}&\textbf{23.48}\%&\textbf{20.91}&33.41 &\textbf{17.66}\%\\
    &GWNET$^\star$&\textbf{17.72}&28.98&13.86\%&\textbf{20.95}&33.47&17.81\%&\textbf{25.55}&41.02&23.68\%&20.96&34.16 &17.79\%\\
    
    & GWNET$^\ast$   &17.76&\textbf{28.97}&\textbf{13.53}\%&21.09&\textbf{33.46}&\textbf{17.58}\%&26.01&40.63&24.11\%&21.01&\textbf{33.38}&\textbf{17.66}\%\\
    \hline
    \hline    
    \multirow{6}[2]{*}{GLA} 
    &AGCRN&17.27&29.70&\textbf{10.78}\%&20.38&34.82&12.70\%&24.59&42.59&16.03\%&20.25&34.84& 12.87\%\\

    &AGCRN$^\star$&17.12&28.10&10.81\%&20.42&33.02&\textbf{11.77}\%&\textbf{24.48}&42.33&15.83\%&20.15&\textbf{32.57}& 12.17\%\\
    &AGCRN$^\ast$&\textbf{16.85}&\textbf{27.38}&10.83\%&\textbf{20.13}&\textbf{32.90}&11.84\%&24.71&\textbf{40.36}&\textbf{14.85}\%& \textbf{20.04}&32.72&\textbf{12.03}\%\\
    \cline{2-14}    &GWNET&17.28&27.68&\textbf{10.18}\%&21.31&33.70&\textbf{13.02}\%&26.99&42.51&17.64\%&21.20&33.58&13.18\%\\

    &GWNET$^\star$&17.18&27.32&10.65\%&\textbf{21.0}&\textbf{33.00}&13.29\%&\textbf{26.39}&42.01&\textbf{17.05}\%&21.10&\textbf{32.97}&13.34\%\\
    
  & GWNET$^\ast$&\textbf{17.16}&\textbf{27.15}&10.87\%&21.23&33.17&13.12\%& 26.41&\textbf{41.95}&17.10\%&\textbf{21.08}&33.03&\textbf{13.13}\%\\
    \hline
    \hline
    \multirow{6}[2]{*}{CA} 
    & AGCRN & --&--&--&--&--&--&--&--& --&--&--&--\\
          
    & AGCRN$^\star$ &16.56&\textbf{26.88}&\textbf{11.93}\%       & 20.13&\textbf{31.87}&15.25\%&\textbf{24.59}&39.65&\textbf{19.86}\%    &\textbf{19.77}&\textbf{31.79}&15.21\%\\
    & AGCRN$^\ast$ &\textbf{16.44}&26.97&12.05\%       & \textbf{19.90}&32.20&\textbf{15.15}\%&24.78&\textbf{39.55}&19.90\%    &19.78&31.98&\textbf{15.19}\%\\
    \cline{2-14}      &GWNET&\textbf{17.14}&\textbf{27.81}&12.62\%&21.68&34.16&17.14\%&28.58&44.13&24.24\%&21.72&34.20 &17.40\%\\
    &GWNET$^\star$&17.19&28.16&\textbf{10.09}\%&22.03&24.66&13.62\%&\textbf{27.05}&\textbf{40.83}&\textbf{21.93}\%&21.38&33.16&16.58\%\\
    &GWNET$^\ast$&17.44&28.08&10.49\%&\textbf{21.23}&\textbf{24.06}&\textbf{13.12}\%&27.08&41.39&22.33\%&\textbf{21.08}&\textbf{32.90}&\textbf{16.12}\%\\
    \Xhline{1pt}
    \end{tabular}%
  \label{tab:addlabel}%
\end{table*}%

\subsection{Experimental Settings}
In this section, we conduct extensive experiments to validate our Hypothesis 1.

\fakeparagraph{Neural Network Architecture}
We evaluate the existence of GWT on two quintessential ASTGNN architectures: \textbf{AGCRN} and \textbf{Graph WaveNet (GWNET)}. AGCRN  integrates an RNN framework, specifically combining AGCN layers with Gated Recurrent Unit (GRU) layers. The AGCN layers are adept at capturing spatial dependencies, whereas the GRU layers are employed to model the temporal dependencies effectively. Conversely, GWNET represents a CNN-based ASTGNN architecture. It amalgamates AGCN, GCN layers, and dilated 1D convolution networks. Here, both GCN and AGCN layers are instrumental in capturing spatial dependencies, whilst the dilated 1D convolution networks are utilized to model the temporal dependencies. AGCRN$^\star$ and GWENT$^\star$ respectively represent AGCRN and GWNET trained within $\mathcal{T}^\star$, while AGCRN$^\ast$ and  GWNET$^\ast$ represent AGCRN and GWNET with in GWT-AGCN described in \secref{sec:eff}, respectively.

\fakeparagraph{Datasets}
We conduct experiments on five of the largest known spatial-temporal datasets. These include PEMS07, a dataset extensively studied \cite{bib:others01:Chen}, along with SD, GBA, GLA, and CA, which were recently introduced in the LargeST dataset \cite{liu2023largest}. 
\tabref{tab:datasets} summarizes the specifications of the datasets used in our experiments. These datasets were partitioned in a 6:2:2 ratio for training, validation, and testing, respectively. The traffic flow data in PEMS07 is aggregated into 5-minute intervals, whereas for SD, GBA, GLA, and CA, the aggregation occurs in 15-minute intervals. We implemented a 12-sequence-to-12-sequence forecast, adhering to the standard protocol in this research domain. 

\fakeparagraph{Implementation Details}
For all evaluated models, we set the number of training iterations to 100. Other training-related configurations adhere to the recommended settings provided in the respective code repositories. To ensure reproducibility and reliability, experiments were conducted ten times on all datasets, except for CA and GLA. Due to their substantially larger data scales, experiments on CA and GLA were limited to three repetitions. These experiments were performed on an NVIDIA RTX A6000 GPU, equipped with 48 GB of memory.

\fakeparagraph{Metrics}
Our comprehensive evaluation encompasses the following dimensions: 
 \textit{i) Performance:} We assess the forecasting accuracy using three established metrics: Mean Absolute Error (MAE), Root Mean Square Error (RMSE), and Mean Absolute Percentage Error (MAPE), and \textit{ii) Efficiency:} Model efficiency is evaluated in terms of both training and inference wall-clock time. Additionally, the batch size during training is reported, reflecting the models' capability to manage large-scale datasets. We set a maximum batch size limit of 64. If a model is unable to operate with this configuration, we progressively reduce the batch size to the highest possible value that fully utilizes the memory capacity of the A6000 GPU.

\subsection{Main Results}
The experimental results are organised as follows: Test accuracies and efficiency comparisons are reported in \tabref{tab:results} and \tabref{tab:efficiency}, respectively.
We also compare the

We make following observations from \tabref{tab:results} and \tabref{tab:efficiency}:
\begin{itemize}
    \item Graph lottery tickets are existent in ASTGNNs. Specifically, AGCRN$^\bigstar$ and GWENT$^\bigstar$ demonstrate performance that is comparable or even superior across all datasets. These findings indicate that $\mathcal{T}^\star$ is a stable 'graph winning lottery ticket' within ASTGNNs when evaluated on datasets such as PEMS07, SD, GBA, GLA, and CA.
    \item Our proposed approach is demonstrably scalable. The CA dataset presents substantial challenges to existing ASTGNNs, evidenced by AGCRN's inability to operate on it. However, the proposed approach facilitates the training of AGCRN on the CA dataset. This capability not only underscores the scalability of the proposed approach but also its superiority. Conventional pruning-based methods necessitate starting the training process with a complete graph. This approach often leads to their inadequacy in identifying graph lottery tickets in large-scale datasets like CA, a limitation that the proposed approach effectively overcomes.
   \item GWT-AGCN has the potential to be an ideal substitute for  AGCN. In comparison, ASTGNN within GWT-AGCN demonstrates enhanced overall performance, particularly in terms of speed, surpassing its predecessor.

    \item GWT-AGCN significantly accelerates the training and inference of the ASTGNNs. The acceleration is more prominent on AGCRNs against GWNETs because a larger portion of the total computation required by GWNETs is used on their GCN layers.
\end{itemize}

\begin{table*}[htbp]
  \centering
 \small
  \caption{ Efficiency comparisons. BS: batch size set during training. Train: training time (in seconds)
per epoch. Infer: inference time (in seconds) on the validation set. Total: total training time (in hours).}
  
  \resizebox{\textwidth}{!}{
    \begin{tabular}{c|cccc|cccc|cccc|cccc|cccc}
    \Xhline{1pt}
    \multirow{2}[4]{*}{Method} & \multicolumn{4}{c|}{PEMS07}  & \multicolumn{4}{c|}{SD}       & \multicolumn{4}{c|}{GBA}      & \multicolumn{4}{c|}{GLA}      & \multicolumn{4}{c}{CA} \\
    \cline{2-21}          & \multicolumn{1}{c|}{BS} & \multicolumn{1}{c|}{Train} & \multicolumn{1}{c|}{Infer} & \multicolumn{1}{c|}{Total} & \multicolumn{1}{c|}{BS} & \multicolumn{1}{c|}{Train} & \multicolumn{1}{c|}{Infer} & \multicolumn{1}{c|}{Total} & \multicolumn{1}{c|}{BS} & \multicolumn{1}{c|}{Train} & \multicolumn{1}{c|}{Infer} & \multicolumn{1}{c|}{Total} & \multicolumn{1}{c|}{BS} & \multicolumn{1}{c|}{Train} & \multicolumn{1}{c|}{Infer} & \multicolumn{1}{c|}{Total} & \multicolumn{1}{c|}{BS} & \multicolumn{1}{c|}{Train} & \multicolumn{1}{c|}{Infer} & \multicolumn{1}{c}{Total} \\
    \hline
    \hline
    AGCRN &64&131&22&4&64&92&15&3&64&536&83&17&45&1413&245&46&--&--&--&--\\
    AGCRN$^\star$ &64&86&14&3&64&78&13&3&64&222&33&7&45&430&69&14&11&1513&196&47\\
    AGCRN$^\ast$&64&56&10&2&64&55&10&2&64&153&24&4&64&286&46&9&16&1021&132&38\\
    \hline
    GWNet& 64&161&21&5&64&82&12&3&64&483&66&15&64&1028&139&32&44&4105&548&113\\ 
    GWNet$^\star$& 64&138&15&4&64&76&12&3&64&380&55&12&64&818&112&26&50&2583&319&81\\ 
GWNet$^\ast$&64&116&13&4&64&71&11&2&64&327&47&1&64&700&96&19&50&2382&295&74\\
    \Xhline{1pt}
    \end{tabular}%
  }
  \label{tab:efficiency}
\end{table*}

\begin{figure}[t]   
  \centering            
  \subfloat[Loss Curves]
  {
      \label{fig:loss_curve_07}\includegraphics[width=0.23\textwidth]{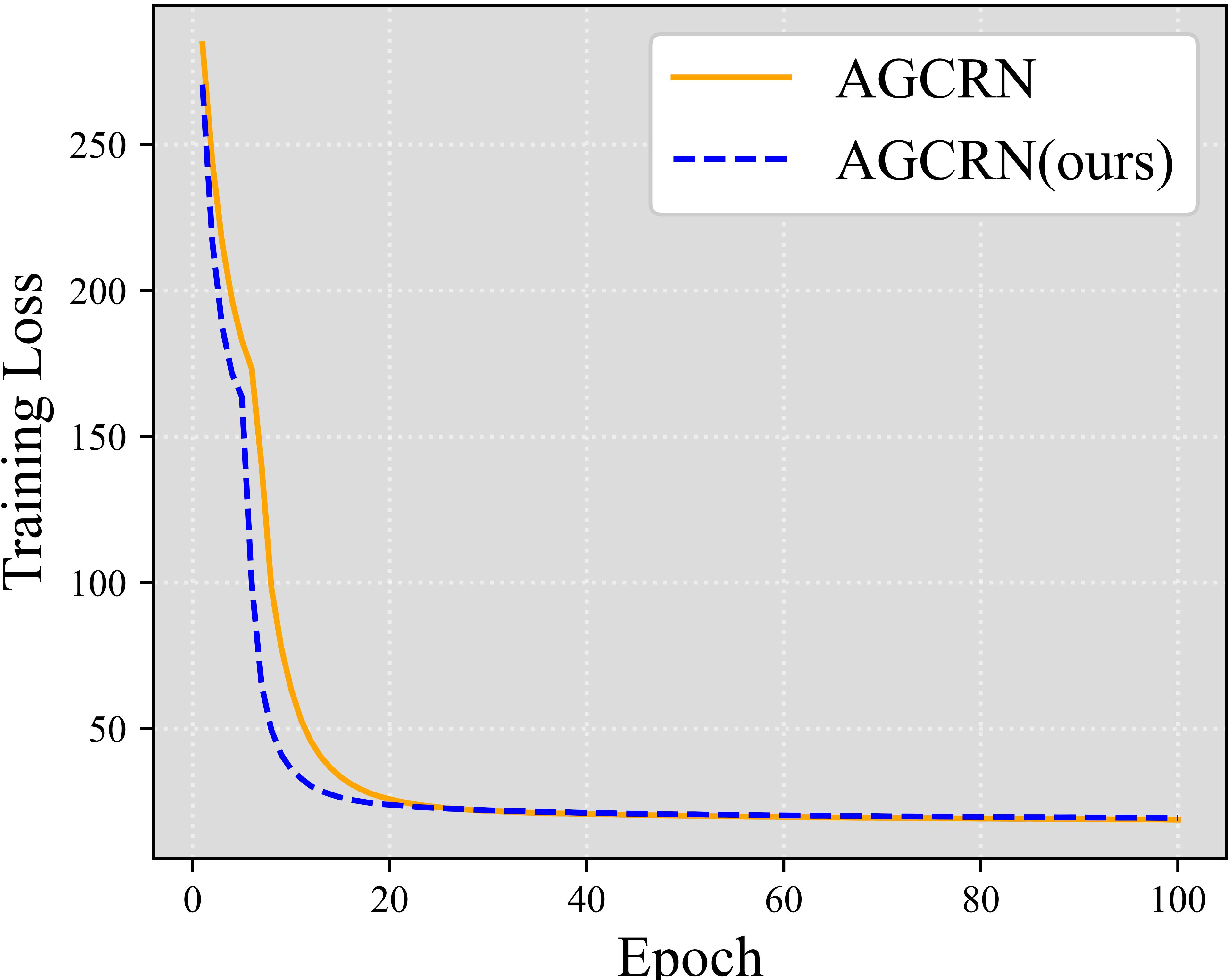}
  }
  \subfloat[MAE Curves]
  {
      \label{fig:mae_curve_07}\includegraphics[width=0.23\textwidth]{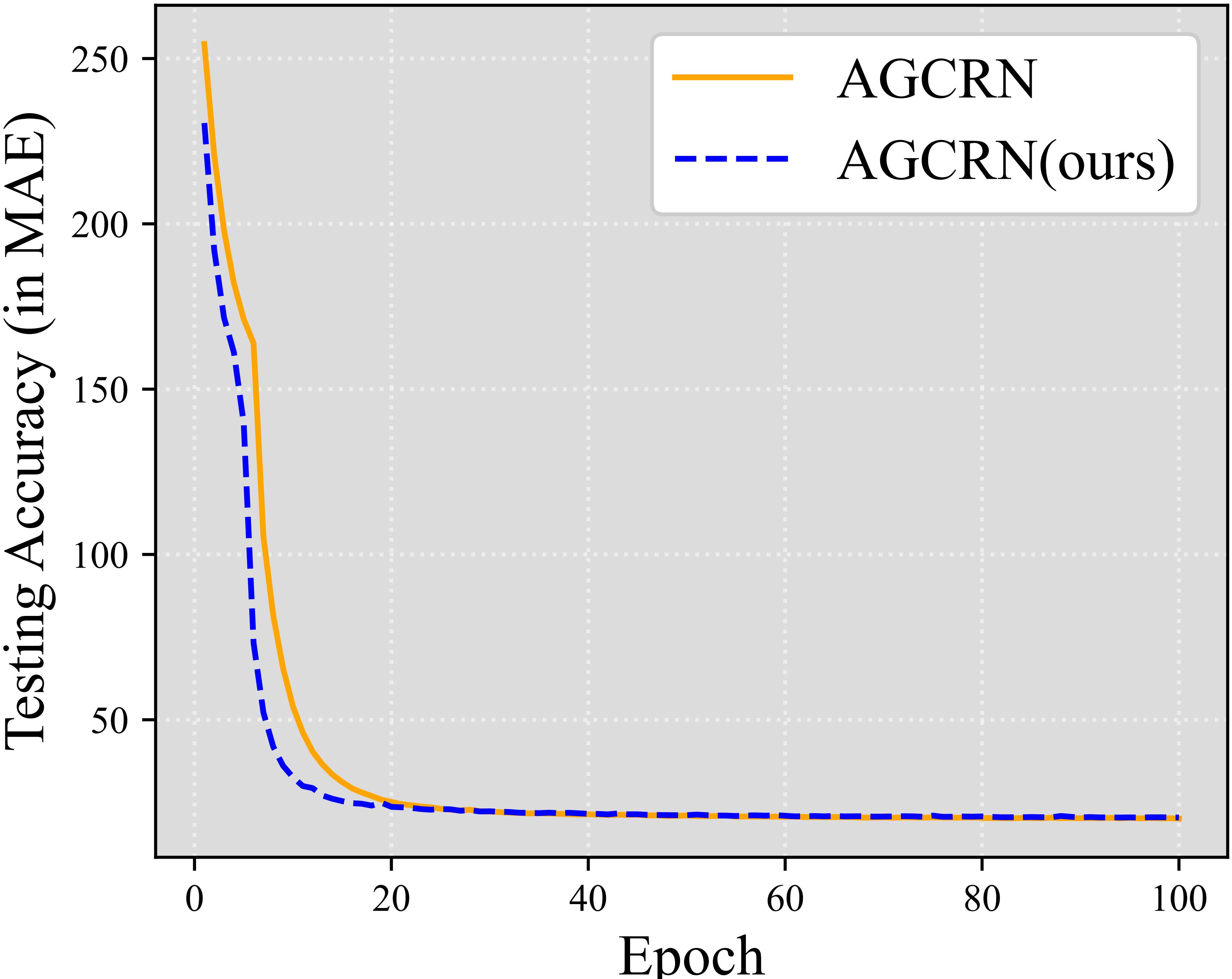}
  }
   \caption{Training loss (a) and testing MAE (b) curve of original AGCRN and AGCRN$^\ast$ trained on PEMS07, respectively.}    
  \label{fig:07}   
\end{figure}

\begin{figure}[t]   
  \centering            
  \subfloat[Loss Curves]
  {
      \label{fig:loss_curve_sd}\includegraphics[width=0.23\textwidth]{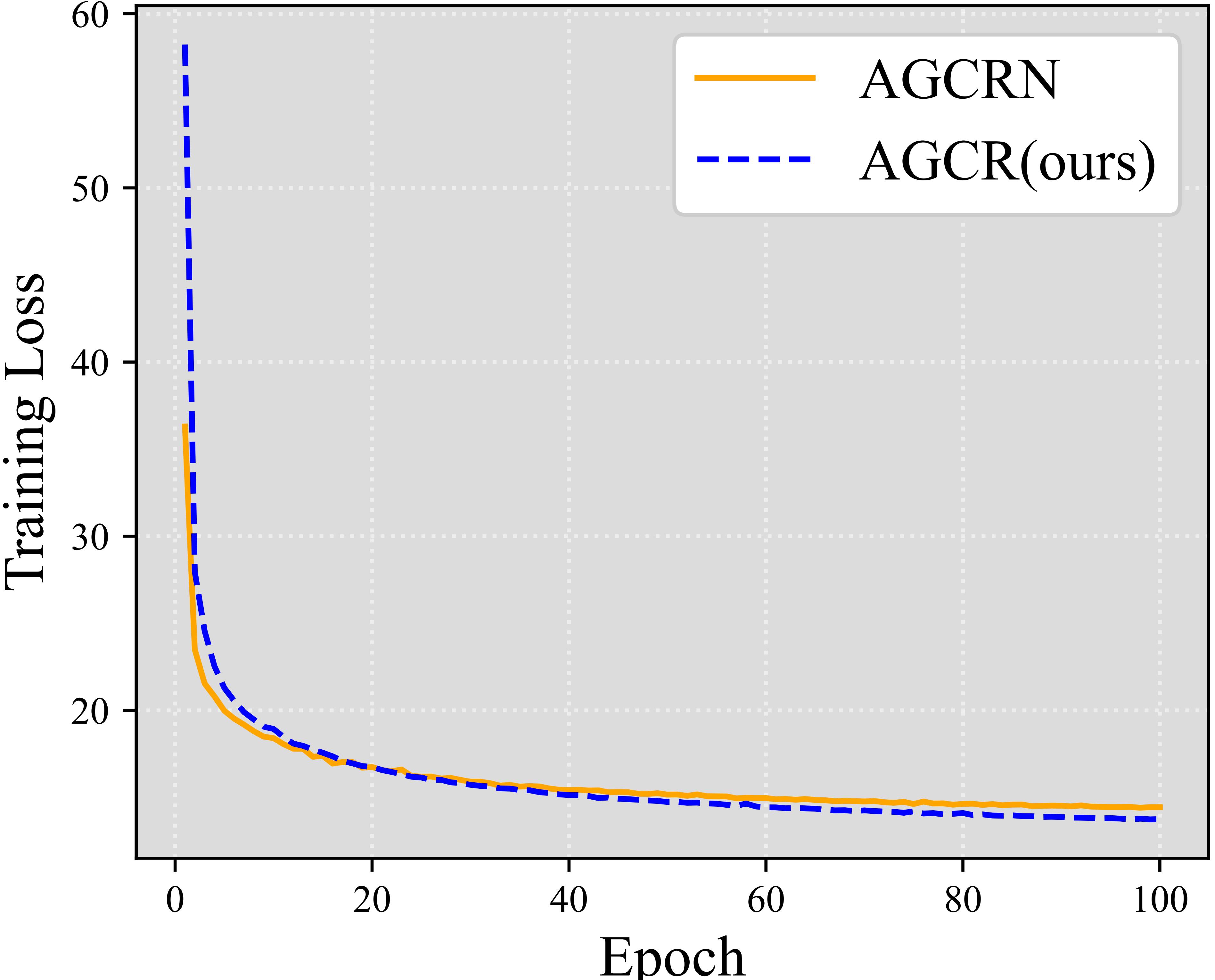}
  }
  \subfloat[MAE Curves]
  {
      \label{fig:mae_curve_sd}\includegraphics[width=0.23\textwidth]{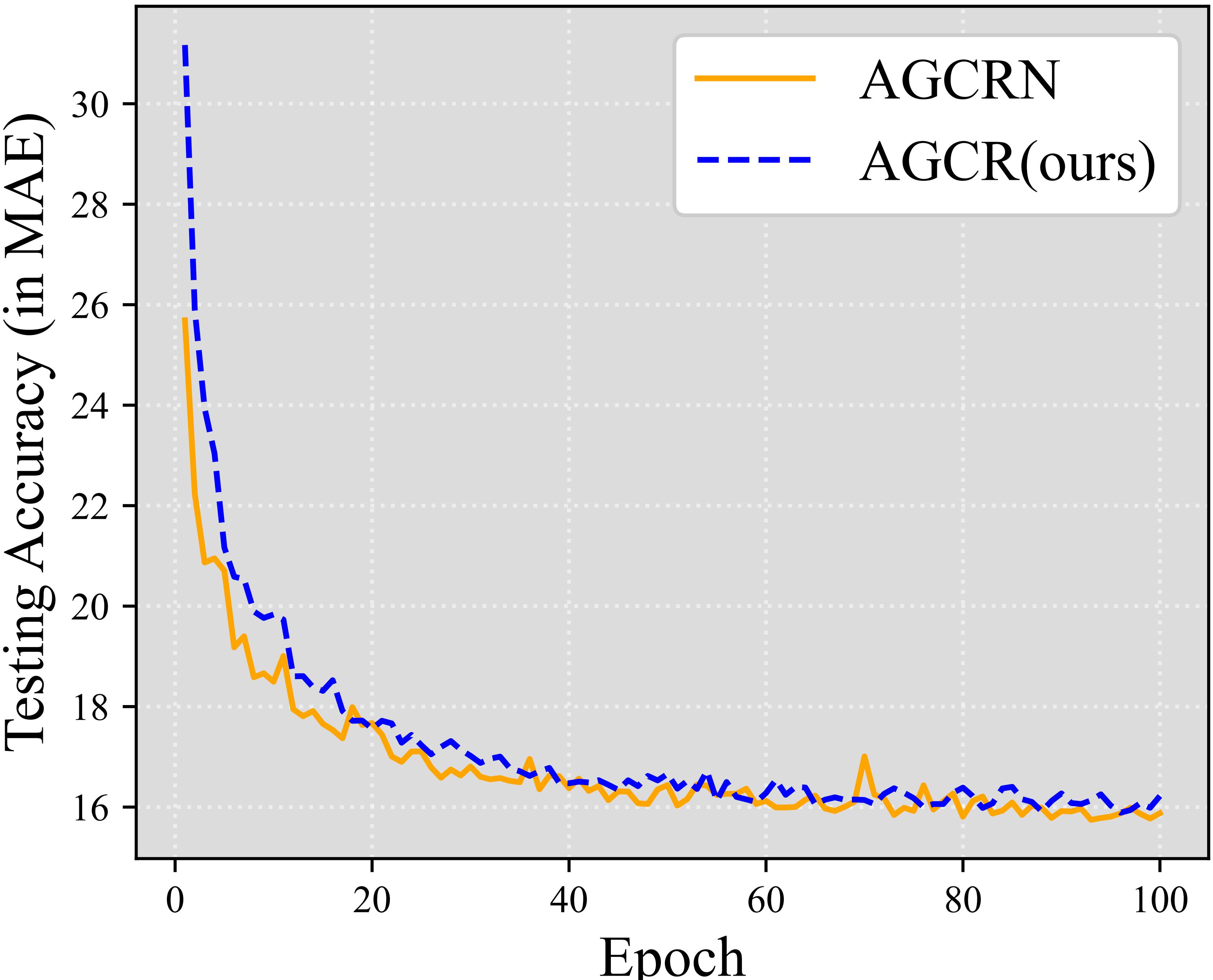}
  }
   \caption{Training loss (a) and testing MAE (b) curve of original AGCRN and AGCRN$^\ast$ trained on SD, respectively.}    
  \label{fig:sd}   
\end{figure}

\subsection{Analysis}\label{sec:anal}
\fakeparagraph{Convergence}
\figref{fig:07} illustrates the training loss and test Mean Absolute Error (MAE) curves of the original AGCRN and AGCRN$^\ast$ under identical hyper-parameter settings on the PEMS07 dataset. Similarly, \figref{fig:sd} presents these curves for the same models on the SD dataset. \Sysname ensures convergence that is as consistent, rapid, and stable as a complete graph model. This feature is particularly advantageous for training on large-scale spatial-temporal data, as it significantly reduces computational overhead without compromising the quality of convergence.

Additionally, the convergence behavior of AGCRN$^\ast$ demonstrates its robustness in capturing complex spatial-temporal dependencies. This attribute is crucial for reliable forecasting in dynamic systems, such as traffic networks, where understanding intricate patterns is key to accuracy.

\fakeparagraph{AGCRN$^\ast$\&GWNET$^\ast$ vs. SOTAs}
AGCRN and GWNET, as representative ASTGNNs introduced between 2019 and 2020, are of significant interest in our study. Our objective is to evaluate the performance of AGCRN and GWNET, particularly when trained using GWT, in comparison with the current state-of-the-art STGNNs. To this end, we selected five advanced STGNNs as baselines: DGCRN \cite{li2023dynamic}, MegaCRN \cite{jiang2023spatio}, STGODE \cite{fang2021spatial}, D$^2$STGNN \cite{shao2022decoupled}, and DSTAGNN \cite{lan2022dstagnn}. These models reflect the most recent trends in the field. DGCRN and MegaCRN, seen as variations of AGCRN, epitomize the latest developments in ASTGNN. STGODE employs neural ordinary differential equations innovatively to effectively model the continuous dynamics of traffic signals. In contrast, DSTAGNN and D$^2$STGNN focus on capturing the dynamic correlations among sensors in traffic networks. From the results presented in \tabref{tab:vs-sotas}, we make the following observations: \textit{i)} ASTGNNs such as DGCRN and MegaCRN consistently exhibit strong performance across most benchmarks. However, their intricate model designs limit scalability, particularly in larger datasets like GLA and CA. \textit{ii)} Methods introduced four years ago, such as AGCRN when trained within GWT-AGCN (\ie AGCRN$^\ast$), continue to demonstrate robust performance across various evaluated datasets. Remarkably, they achieve state-of-the-art performance on specific datasets including GBA, GLA, and CA. These findings suggest that GWT-AGCN could play a crucial role in the development of scalable ASTGNNs for future research.

\fakeparagraph{Impact of averaged initialization of node embedding $e_{c}$}
In this study, we employ AGCRN$^\ast$ as a benchmark to evaluate the impact of averaged initialization of node embedding $e_{c}$. \tabref{tab:impact} presents a comparative analysis between AGCRN$^\ast$ and \textit{AGCRN}$^\ast$, \ie random initialization of $e_{c}$. The results indicate that AGCRN$^\ast$ consistently outperforms \textit{AGCRN}$^\ast$ in terms of forecasting accuracy. This finding underscores the significance of deliberate initialization strategies for $e_{c}$ in enhancing the predictive performance of the model.

\fakeparagraph{Comparison with AGS}
\begin{table*}[htbp]
\caption{Comparative experimental results between AGS and our method within AGCRN.}
\label{tab:ags}
\begin{tabular}{l|llllllllllll}
\Xhline{1pt}
                         & \multicolumn{3}{l}{PEMS07}                                                                                                                                                                     & \multicolumn{3}{l}{SD}                                                                                                                                                                          & \multicolumn{3}{l}{GBA}                                                                                                                                                                         & \multicolumn{3}{l}{GLA}          \\ 
\cline{2-13} 
\multirow{-2}{*}{Methods}& MAE                                                           & RMSE                                                          & MAPE                                                           & MAE                                                           & RMSE                                                          & MAPE                                                            & MAE                                                           & RMSE                                                          & MAPE                                                            & MAE   & RMSE           & MAPE    \\ 
\hline
\hline
AGCRN(AGS)@99.7\%        & 24.37                                                         & 39.33                                                         & 9.57\%                                                         & 21.83         & 37.10         & 15.03\%        & 23.61         & 38.32         & 17.19\%          & 22.23 & 35.55          & 13.39\% \\
AGCRN$^{\star}$          & 20.67         & \textbf{33.95} & 8.67\%         & 18.56         & 31.89          & 13.19\%         & 20.71         & 33.75        & 15.95\%         & 20.15 & \textbf{32.57} & 12.17\% \\
AGCRN$^*$                & \textbf{20.57} & 34.42        & \textbf{8.59\%} &  \textbf{18.13} & \textbf{30.92} & \textbf{12.41\%} & \textbf{20.51} & \textbf{33.57} & \textbf{15.33\%} & 20.04 & 32.72          & \textbf{12.03\%} \\ 
\Xhline{1pt}
\end{tabular}
\end{table*}
We compared our method with AGS, the state-of-the-art approach, to validate its superiority. The performance of AGS with a sparsity of 99.7\% is reported on PEMS07, SD, GBA and GLA, while the sparsity of our method is 99.8\%/99.7\%/ \\99.99\%/99.99\% for PEMS07/SD/GBA/GLA. Since AGS does not provide an implementation on GWNet, we only report the results for AGCRN. The lack of CA results is due to AGS encountering out-of-memory (OOM) issues. From \tabref{tab:ags}, we can see that our method significantly outperforms AGS.

\fakeparagraph{Perturbed $\mathcal{T}^\star$}
We attribute the effectiveness of $\mathcal{T}^\star$ to its robust connectivity, which is crucial for ASTGNN's ability to model global spatial dependencies. To further validate this perspective, we introduce a perturbation process illustrated in \figref{fig:peturb} to $\mathcal{T}^\star$, resulting in $\mathcal{\Tilde{T}}^\star$, according to the following steps:
\begin{enumerate}
    \item For a given $\mathcal{T}^\star$ with $N$ nodes, we randomly remove $M$ edges between the center node and the leaf node. The perturbation ratio $p$ of the removed edges is defined as $\frac{M}{N-1}$.
    \item Subsequently, we randomly add $M$ new edges, connecting previously isolated nodes.
\end{enumerate}
These steps intentionally disrupt the original connectivity in $\mathcal{T}^\star$, while ensuring that the overall sparsity of the network remains constant.
\figref{fig:connect} show the MAE curves of AGCRN trained via $\mathcal{\Tilde{T}}^\star$ of a ratio $p$ from 0 to 50\%. We can see that as 
$p$ increases, the accuracy of the model decreases. This indicates the importance of preserving the graph's connectivity to model global spatial dependencies

\begin{figure}[t]
    \centering
    \includegraphics[width=0.8\linewidth]{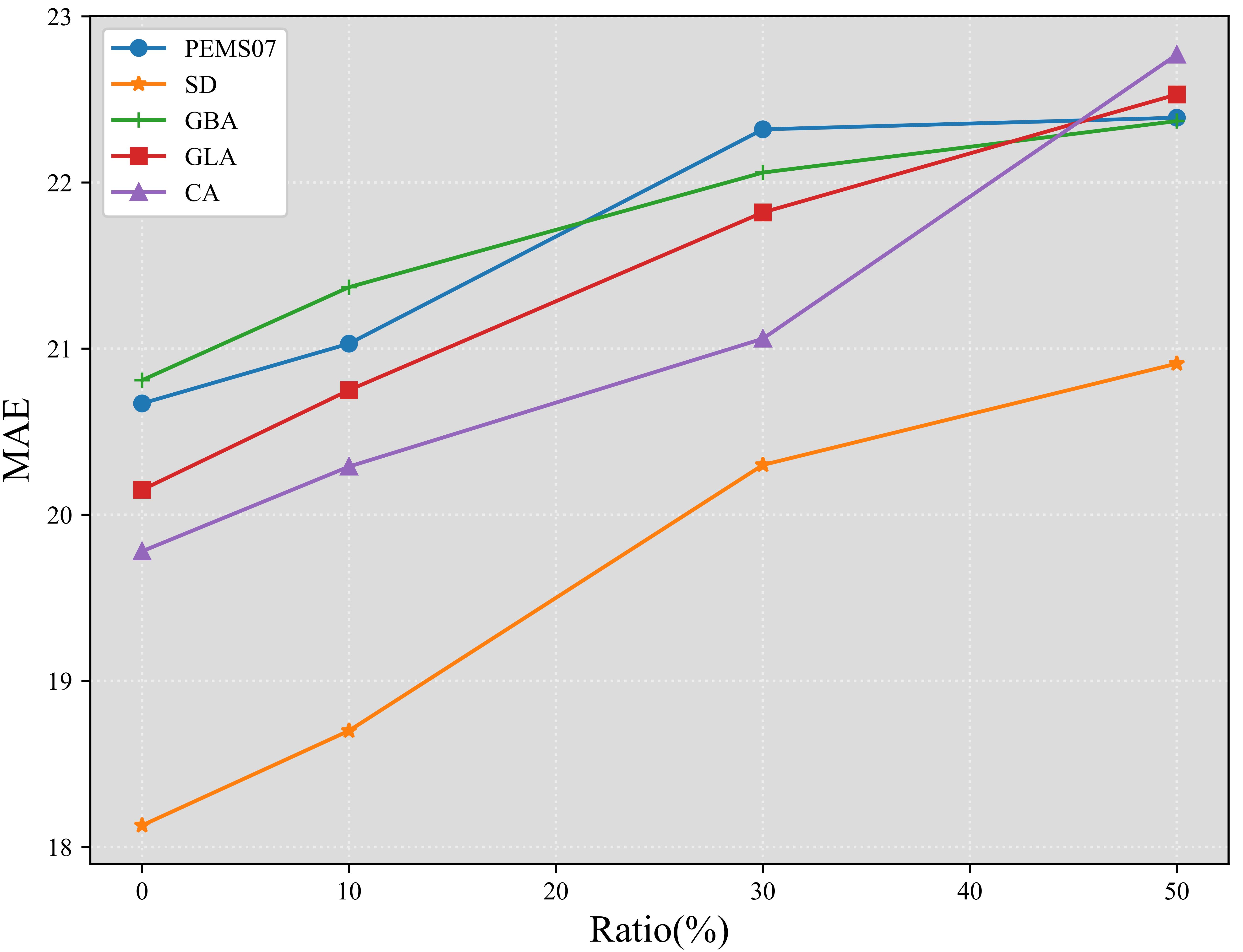}
    \caption{Testing accuracies, measured in MAE, for AGCRN on the PEMS07, SD, GBA, GLA, and CA datasets, with a perturbation ratio of $p$ ranging from 0\% to 50\%.}
    \label{fig:connect}
\end{figure}

\begin{table}[htbp]
\small
\caption{AGCRN$^\ast$ and GWNET$^\ast$ compared with current state-of-the-art STGNNs. We report the results averaged in 12 horizons. Results without standard deviations are sourced from published papers.}
\begin{tabular}{l|llll}
\Xhline{1pt}
Dataset                 & Method  & MAE                 & RMSE                & MAPE(\%)            \\ \hline\hline
\multirow{7}{*}{PEMS07} & AGCRN$^\ast$   & 20.57$\pm$0.11          & 34.42+0.05          & \textbf{8.59$\pm$0.13}  \\
                        & GWNET$^\ast$   & \textbf{19.80$\pm$0.19} & 32.84$\pm$0.25          & 8.62$\pm$0.11           \\
                        & DGCRN   & 20.5$\pm$0.23           & 33.32$\pm$0.23          & 8.45$\pm$0.20           \\
                        & MegaCRN & 19.86$\pm$0.36          & \textbf{32.69$\pm$0.36} & 8.62$\pm$0.21           \\
                        & STGODE  & 22.99               & 37.54               & 10.14               \\
                        & D2STGNN & 20.50               & 33.08               & 8.42                \\
                        & DSTAGNN & 20.50               & 34.51               & 9.01                \\ \hline
\multirow{7}{*}{SD}     & AGCRN$^\ast$   & 18.13$\pm$0.30          & 30.92+0.32          & \textbf{12.41$\pm$0.10} \\
                        & GWNET$^\ast$   & 17.89$\pm$0.18          & 30.06+0.19          & 12.57$\pm$0.21          \\
                        & DGCRN   & 18.02               & 30.09               & 12.07               \\
                        & MegaCRN & \textbf{17.76$\pm$0.21} & \textbf{29.62+0.17} & 12.69$\pm$0.13          \\
                        & STGODE  & 21.79               & 35.37               & 13.22               \\
                        & D2STGNN & 17.85               & 29.51               & 11.54               \\
                        & DSTAGNN & 21.82               & 34.68               & 14.40               \\ \hline
\multirow{7}{*}{GBA}    & AGCRN$^\ast$   & \textbf{20.51$\pm$0.13} & \textbf{33.57+0.27} & \textbf{15.3$\pm$0.17}  \\
                        & GWNET $^\ast$  & 21.01$\pm$0.07          & 33.38$\pm$0.30          & 17.66$\pm$0.15          \\
                        & DGCRN   & 20.91               & 33.83               & 16.88               \\
                        & MegaCRN & 20.69$\pm$0.17          & 33.61$\pm$0.14          & 15.52$\pm$0.09          \\
                        & STGODE  & 21.79               & 35.37               & 18.26               \\
                        & D2STGNN & 20.71               & 33.65               & 15.04               \\
                        & DSTAGNN & 23.82               & 37.29               & 20.16               \\ \hline
GLA                     & AGCRN$^\ast$   & \textbf{20.04$\pm$0.07} & \textbf{32.72$\pm$0.19} & \textbf{12.03$\pm$0.18} \\
                        & GWNET$^\ast$   & 21.08$\pm$0.05          & 33.03$\pm$0.11          & 13.13$\pm$0.20          \\
                        & STGODE  & 21.49               & 36.14               & 13.72\%             \\
                        & DSTAGNN & 24.13               & 38.15               & 15.07\%             \\ \hline
CA                      & AGCRN$^\ast$   & \textbf{19.78$\pm$0.08} & \textbf{31.98+0.29} & \textbf{15.59$\pm$0.43} \\
                        & GWNET$^\ast$   & 21.08$\pm$0.05          & 33.03$\pm$0.11          & 13.13$\pm$0.20          \\
                        & STGODE  & 20.77               & 36.60               & 16.80               \\ \Xhline{1pt}
\end{tabular}
\label{tab:vs-sotas}
\end{table}

\begin{table}[htbp]
\small
\caption{Ablation study of averaged initialization of $e_{c}$. We report the results averaged in 12 horizons. }
\begin{tabular}{l|llll}
\Xhline{1pt}
Dataset                 & Method & MAE                 & RMSE                & MAPE(\%)            \\ \hline \hline
\multirow{2}{*}{PEMS07} & AGCRN$^\ast$  & \textbf{20.57$\pm$0.11}          & \textbf{34.42$\pm$0.05}          & \textbf{8.59$\pm$0.13}  \\
                        & \textit{AGCRN}$^\ast$  & 20.93$\pm$0.17          & 35.17$\pm$0.12          & 8.87$\pm$0.26           \\ \hline
\multirow{2}{*}{SD}     & AGCRN$^\ast$  & \textbf{18.13$\pm$0.30}          & \textbf{30.92+0.32}          & \textbf{12.41$\pm$0.10} \\
                        & \textit{AGCRN}$^\ast$  & 18.63$\pm$0.28          & 32.19$\pm$0.29          & 12.67$\pm$0.24          \\ \hline
\multirow{2}{*}{GBA}    & AGCRN$^\ast$  & \textbf{20.51$\pm$0.13} & \textbf{33.57+0.27} & \textbf{15.3$\pm$0.17}  \\
                        & \textit{AGCRN}$^\ast$  & 21.16$\pm$0.37          & 34.28+0.30          & 16.12$\pm$0.35          \\ \hline
\multirow{2}{*}{GLA}    & AGCRN$^\ast$  & \textbf{20.04$\pm$0.07} & \textbf{32.72+0.19} & \textbf{12.03$\pm$0.18} \\
                        & \textit{AGCRN}$^\ast$  & 20.23$\pm$0.09          & 33.16$\pm$0.20          & 12.32$\pm$0.17          \\ \hline
\multirow{2}{*}{CA}     & AGCRN$^\ast$  & \textbf{19.78$\pm$0.08} & \textbf{31.98+0.29} & \textbf{15.59$\pm$0.43} \\
                        & \textit{AGCRN}$^\ast$  & 20.01+0.06          & 32.64$\pm$0.19          & 16.73$\pm$0.56          \\ \Xhline{1pt}
\end{tabular}\label{tab:impact}
\end{table}

\begin{figure}[t]
    \centering
    \includegraphics[width=0.9\linewidth]{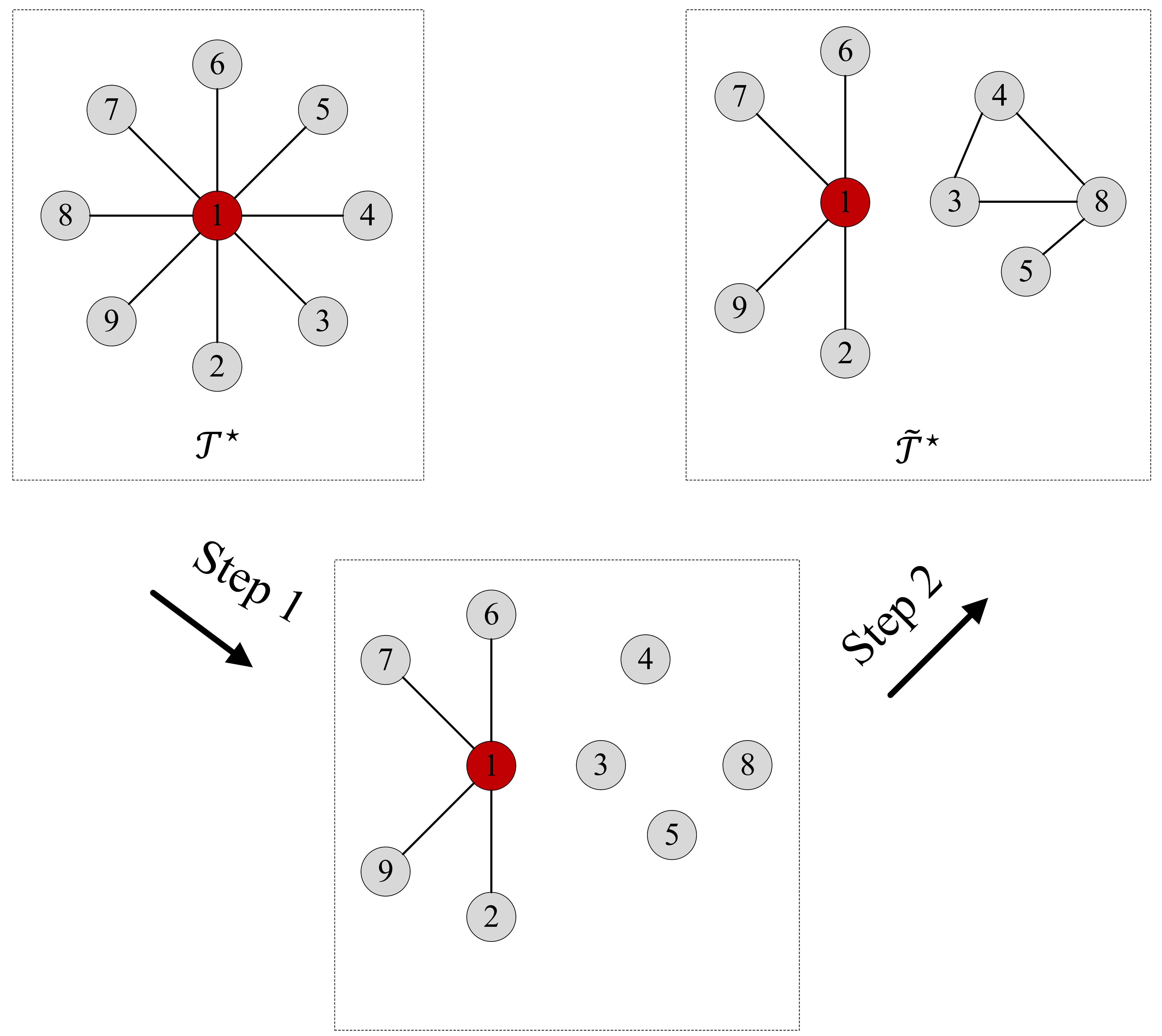}
    \caption{Perturbation process with a perturbation ratio of $p$, with a pre-specified node number.}
    \label{fig:peturb}
\end{figure}
\section{Conclusion}

This paper introduces a novel approach in the realm of ASTGNNs by leveraging the GWT concept, inspired by the Lottery Ticket Hypothesis. This method markedly reduces the computational complexity of ASTGNNs, transitioning from a quadratic to a linear scale, thereby streamlining their deployment. Our innovative strategy of adopting a star topology for GWT, without necessitating exhaustive training cycles, maintains high model performance with significantly lower computational demands. Empirical validations across various datasets underscore our method's capability to achieve performance on par with full models, but at a fraction of the computational cost. This breakthrough not only underscores the existence of efficient sub-networks of the spatial graphs within ASTGNNs, but also extends the applicability of the Lottery Ticket Hypothesis to scenarios where resources are limited. Consequently, this work represents a significant leap forward in the optimization and practical application of graph neural networks, particularly in environments where computational resources are constrained.
In the future, we will develop new STGNNs based on \sysname, aimed at long-term spatial-temporal forecasting.
\bibliographystyle{ACM-Reference-Format}
\clearpage


\bibliography{cites}
\appendix
\setcounter{theorem}{0}
\section{appendix}

\subsection{Proof of Proposition 1}\label{append:proof}
Initially, in the case of $N = 3$ as illustrated in \figref{fig:3-sp-tree}, all spanning trees of this complete graph meet the diameter $r= 2$, and satisfy definition of star spanning tree in Hypothesis 1. Their count corresponds to the number of nodes $N$ in the complete graph.


\begin{figure}[t]
\centering 
\includegraphics[width=0.35\textwidth]{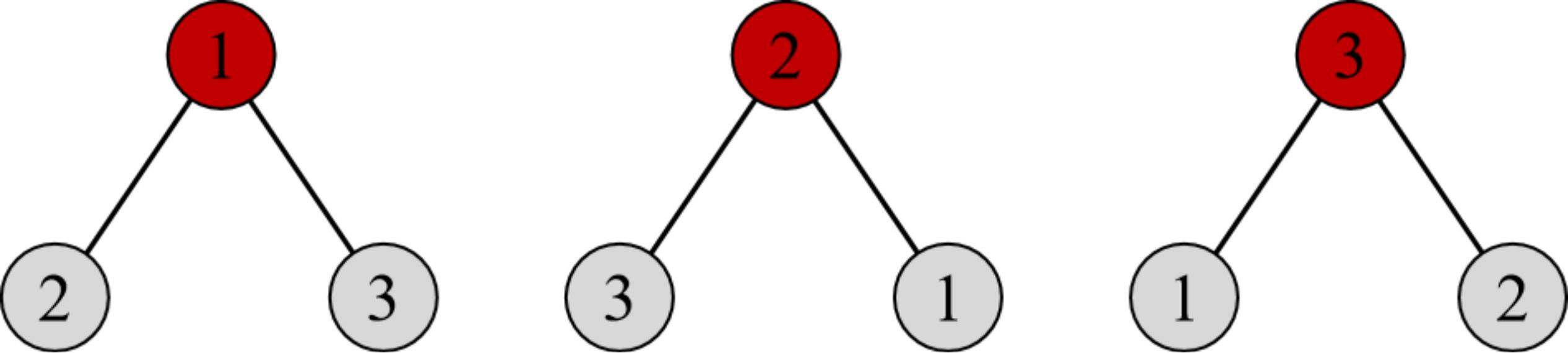}
\caption{
    The 3-order spanning trees of the complete graph are all star spanning trees.}
\label{fig:3-sp-tree}
\end{figure}

Subsequently, assuming $N = k-1$, the complete graph $\mathcal{K}_{k-1}$ aligns with this conclusion, and the star spanning tree is $\mathcal{T}^{\star}_{k-1}$.

    

In the scenario where $N=k$, the original graph is equivalent to inserting a new node into $\mathcal{T}^{\star}_{k-1}$. \figref{fig:add} shows two possible scenarios.

\begin{figure}[t]
\centering 
\includegraphics[width=0.4\textwidth]{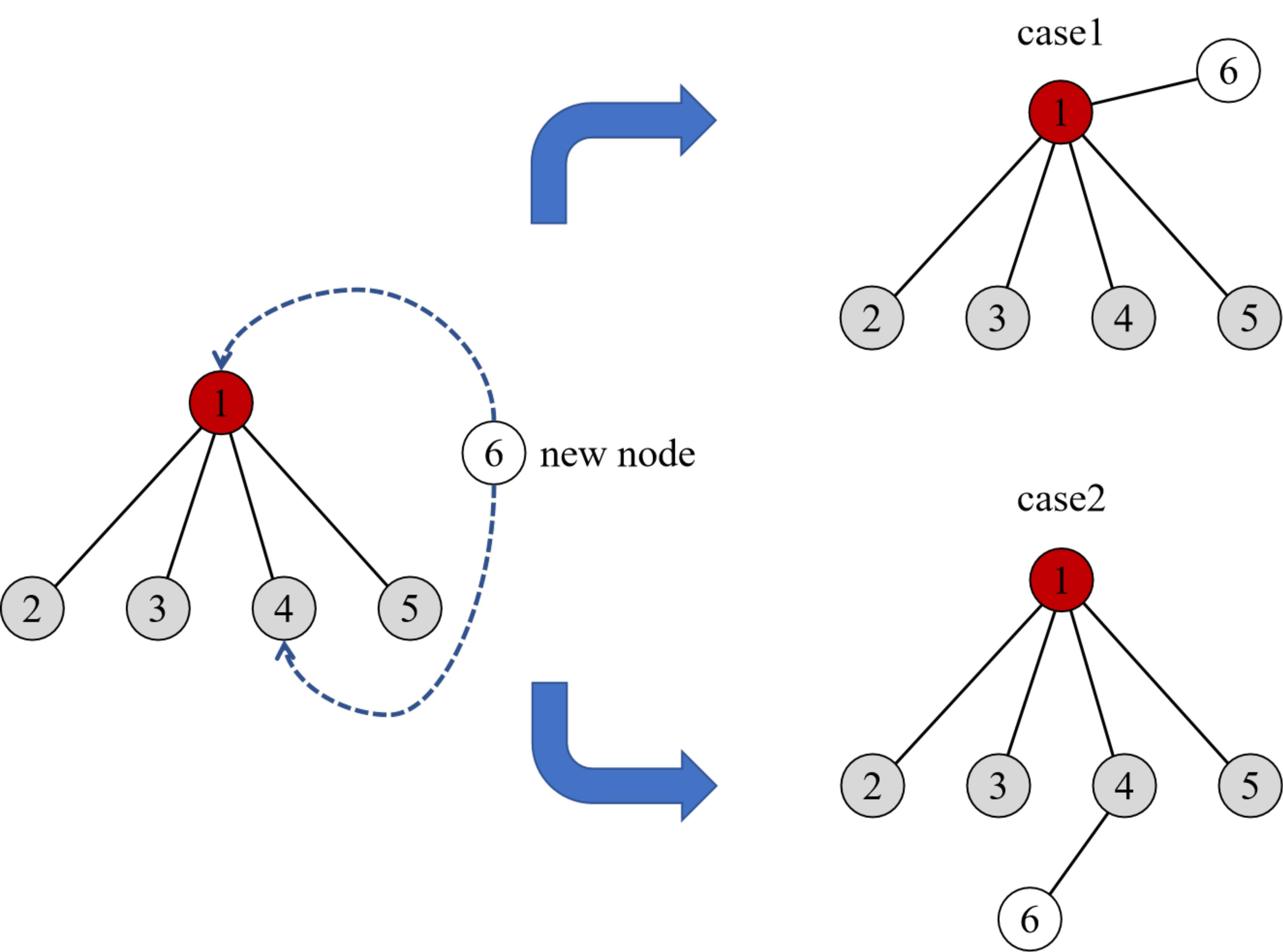}
\caption{Adds the new node to the native tree.}
\label{fig:add}
\end{figure}

Only in the first scenario, does the spanning tree $\mathcal{T}^{\star}_{k}$ meet the diameter $r= 2$. The second scenario will increase some paths that are longer than 2. For the spanning tree $\mathcal{T}^{\star}_{k}$ formed in the first scenario, it still conforms to the definition of
star spanning tree in Hypothesis 1.

\subsection{Justify the effectiveness of star topology theoretically}\label{append:graph}

Given two graphs $\mathcal{G}$ and $\mathcal{\hat{G}}$, if $\boldsymbol{L}_{\mathcal{G}} \preceq \boldsymbol{L}_{\mathcal{\hat{G}}}$, we denote this as: $\mathcal{G}\preceq\mathcal{\hat{G}}$. 

Here, $\boldsymbol{L}_{\mathcal{G}}$ and $\boldsymbol{L}_{\mathcal{\hat{G}}}$ represent the Laplacians of $\mathcal{G}$ and $\mathcal{\hat{G}}$, respectively. The symbol $\preceq$ denotes the Loewner partial order, applicable to certain pairs of symmetric matrices.

The Courant-Fisher Theorem provides that:

\begin{equation}
    \lambda_i(A)=\max_{S:\dim(S) = i}\min_{x\in S} \frac{x^TAx}{x^Tx}.
\end{equation}

Thus, assuming $\lambda_1,\ldots, \lambda_N$ are the eigenvalues of $\boldsymbol{L}_{\mathcal{G}}$ and $\tilde{\lambda}_1,\ldots,\tilde{\lambda}_n$ are the eigenvalues of $\boldsymbol{L}_{\mathcal{\hat{G}}}$. The relation $L_{\mathcal{G}} \preceq L_{\mathcal{\hat{G}}}$ means for all i, $\lambda_{i}\leq \hat{\lambda}_i$.

\fakeparagraph{Graph Spectral Similarity} \cite{Daniel}\cite{Batson2013} If $L_{\mathcal{\hat{G}}}/ \sigma \preceq L_{\mathcal{G}} \preceq \sigma L_{\mathcal{\hat{G}}}$, we say graphs $\mathcal{G}$ and $\mathcal{\hat{G}}$ are $\sigma$-spectral similar. Thus, $\mathcal{\hat{G}}$ is a $\sigma$-approximation of $\mathcal{G}$.

Based on spectral graph theory  \cite{Daniel}\cite{Batson2013}, if a graph is a $\sigma-approximation$ of another one. We mean they have similar eigensystems, therefore with similar properties. Thus, if $\boldsymbol{L}_{\mathcal{T}_N} / \sigma \preceq L_{\mathcal{K}_N} \preceq\sigma L_{\mathcal{T}_N}$, 
$\mathcal{K}_N$ and $\mathcal{T}_N$ have similar properties. Such a $\mathcal{T}_N$ can effectively replace $\mathcal{K}_N$ to learn a good representation, where the edges of $\mathcal{T}_N$ are much fewer than those of $\mathcal{K}_N$.  Below, we will prove that $\mathcal{T}_N$ is a $\sigma-approximation$ of $\mathcal{K}_N$.
\begin{lemma}
    The laplacian of $\mathcal{K}_N$ has eigenvalue 0 with multiplicity 1 and $N$ with multiplicity $N-1$.
\end{lemma}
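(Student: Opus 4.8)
The plan is to write the Laplacian of $\mathcal{K}_N$ in closed form and read off its spectrum directly. Since every vertex of $\mathcal{K}_N$ has degree $N-1$, the degree matrix is $(N-1)\boldsymbol{I}$, and the adjacency matrix is $\boldsymbol{J}-\boldsymbol{I}$, where $\boldsymbol{J}$ denotes the $N\times N$ all-ones matrix. Hence
\begin{equation}
\boldsymbol{L}_{\mathcal{K}_N}=(N-1)\boldsymbol{I}-(\boldsymbol{J}-\boldsymbol{I})=N\boldsymbol{I}-\boldsymbol{J}.
\end{equation}
So it suffices to determine the eigenvalues of $\boldsymbol{J}$ and shift by $N$.

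Next I would recall that $\boldsymbol{J}$ is real symmetric, hence orthogonally diagonalizable, and that $\boldsymbol{J}=\mathbf{1}\mathbf{1}^{\top}$ has rank $1$. The all-ones vector $\mathbf{1}$ satisfies $\boldsymbol{J}\mathbf{1}=N\mathbf{1}$, so $N$ is an eigenvalue with eigenvector $\mathbf{1}$; and any vector $x$ with $\mathbf{1}^{\top}x=0$ satisfies $\boldsymbol{J}x=0$, so $0$ is an eigenvalue whose eigenspace is the hyperplane $\mathbf{1}^{\perp}$ of dimension $N-1$. Since $1+(N-1)=N$ this accounts for the full spectrum, so $\boldsymbol{J}$ has eigenvalue $N$ with multiplicity $1$ and eigenvalue $0$ with multiplicity $N-1$. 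Applying the shift, $\boldsymbol{L}_{\mathcal{K}_N}=N\boldsymbol{I}-\boldsymbol{J}$ has eigenvalue $N-N=0$ with multiplicity $1$ (eigenvector $\mathbf{1}$, as expected for a connected graph's Laplacian) and eigenvalue $N-0=N$ with multiplicity $N-1$, which is the claim.

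There is essentially no serious obstacle here; the only point that deserves a word of care is the completeness of the eigendecomposition, i.e.\ that the eigenspaces for $0$ and $N$ already exhaust $\mathbb{R}^N$. This follows because $\boldsymbol{J}$ is symmetric (so its eigenspaces span $\mathbb{R}^N$) together with the dimension count $1+(N-1)=N$; equivalently one can invoke the fact that $\mathcal{K}_N$ is connected to pin down the multiplicity of $0$ as exactly $1$. I expect this lemma to be used subsequently as the baseline spectrum against which the star spanning tree $\mathcal{T}_N$ is compared when establishing the $\sigma$-spectral approximation bound $\boldsymbol{L}_{\mathcal{T}_N}/\sigma \preceq \boldsymbol{L}_{\mathcal{K}_N}\preceq \sigma\,\boldsymbol{L}_{\mathcal{T}_N}$.
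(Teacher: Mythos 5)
Your proof is correct and is essentially the same argument as the paper's: both identify $\operatorname{span}(\mathbf{1})$ as the kernel and the hyperplane $\mathbf{1}^{\perp}$ as the eigenspace for eigenvalue $N$. The only difference is packaging — you route the computation through the closed form $\boldsymbol{L}_{\mathcal{K}_N}=N\boldsymbol{I}-\mathbf{1}\mathbf{1}^{\top}$ and diagonalize the rank-one matrix, whereas the paper verifies $\boldsymbol{L}_{\mathcal{K}_N}\boldsymbol{\psi}=N\boldsymbol{\psi}$ coordinatewise for $\boldsymbol{\psi}\perp\mathbf{1}$; your version makes the exhaustiveness of the two eigenspaces slightly more explicit, but the mathematical content is identical.
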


\begin{proof}[Proof of Lemma 1.]To compute the non-zero eigenvalues, let $\boldsymbol{\psi}$ be any non-zero vector orthogonal to the all-1s vector, so

\begin{equation}
    \sum_a\boldsymbol{\psi}(a)=0.
\end{equation}

The Laplacian Matrix of a weighted graph $\mathcal{G}=(V,E,w), w:E\to\mathbb{R}^{+}$ , is designed to capture the Laplacian quadratic form:

\begin{equation}\label{eq:lg}
\begin{split}
    (\boldsymbol{L}_{\mathcal{G}}\boldsymbol{x})(a)&=\sum_{(a,b)\in E}w_{a,b}(\boldsymbol{x}(a)-\boldsymbol{x}(b))
    \\&=d(a)\boldsymbol{x}(a)-\sum_{(a,b)\in E}w_{a,b}\boldsymbol{x}(b).
\end{split}
\end{equation}

We now compute the first coordinate of $\boldsymbol{L}_{\mathcal{K}_n}\boldsymbol{\psi}$. Using the expression for the action of the Laplacian as an operator, we find

\begin{equation}
\begin{split}
    (\boldsymbol{L}_{\mathcal{K}_n}\boldsymbol{\psi}) (1)&=\sum_{v\geq2}(\boldsymbol{\psi}(1)-\boldsymbol{\psi}(b))
    \\&=(n-1)\boldsymbol{\psi}(1)-\sum_{v=2}^n\boldsymbol{\psi}(b)=n\boldsymbol{\psi}(1).
    \end{split}
\end{equation}

As the choice of coordinate was arbitrary, we have $\boldsymbol{L\psi}=N\boldsymbol{\psi}$. So, every vector orthogonal to the all-1s vector is an eigenvector of eigenvalue $N$.
\end{proof}

\begin{lemma} Let $\mathcal{G}=(\mathcal{V},\mathcal{E})$ be a graph, and let $a$ and $b$ be vertices of degree one that are both connected to another vertex $c$. Then, the vector $\boldsymbol{\psi=\delta_{a}-\delta_{b}}$ is an eigenvector of $\boldsymbol{L}_{G}$ of eigenvalue 1
\end{lemma}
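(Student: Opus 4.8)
The plan is to compute the action of the Laplacian $\boldsymbol{L}_\mathcal{G}$ on the candidate eigenvector $\boldsymbol{\psi} = \boldsymbol{\delta}_a - \boldsymbol{\delta}_b$ coordinate by coordinate, using the operator form of the Laplacian from \equref{eq:lg}, namely $(\boldsymbol{L}_\mathcal{G}\boldsymbol{x})(u) = d(u)\boldsymbol{x}(u) - \sum_{(u,v)\in\mathcal{E}} \boldsymbol{x}(v)$. Since $\boldsymbol{\psi}$ is supported only on $\{a,b\}$, there are really only three kinds of vertices to check: the vertex $a$, the vertex $b$, and every other vertex $u \notin \{a,b\}$ (which includes the common neighbor $c$).

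First I would handle $u = a$. Because $a$ has degree one with its unique neighbor being $c$, we get $(\boldsymbol{L}_\mathcal{G}\boldsymbol{\psi})(a) = d(a)\boldsymbol{\psi}(a) - \boldsymbol{\psi}(c) = 1\cdot 1 - 0 = 1 = 1\cdot\boldsymbol{\psi}(a)$, using $\boldsymbol{\psi}(a)=1$ and $\boldsymbol{\psi}(c)=0$ (here $c\neq b$ since $b$ also has degree one, so if $c=b$ then $b$ would be adjacent to $a$ and $c$, contradicting degree one — I should note this sanity check, or simply that $a\neq b$ and both are adjacent only to $c$). Symmetrically, for $u = b$ we get $(\boldsymbol{L}_\mathcal{G}\boldsymbol{\psi})(b) = d(b)\boldsymbol{\psi}(b) - \boldsymbol{\psi}(c) = 1\cdot(-1) - 0 = -1 = 1\cdot\boldsymbol{\psi}(b)$.

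Next I would handle an arbitrary vertex $u \notin \{a,b\}$. Then $\boldsymbol{\psi}(u) = 0$, so $(\boldsymbol{L}_\mathcal{G}\boldsymbol{\psi})(u) = d(u)\cdot 0 - \sum_{(u,v)\in\mathcal{E}} \boldsymbol{\psi}(v)$. In this sum, the only potentially nonzero terms are $v=a$ and $v=b$. If $u = c$, both $a$ and $b$ are neighbors of $c$, so the sum is $\boldsymbol{\psi}(a) + \boldsymbol{\psi}(b) = 1 + (-1) = 0$. If $u \neq c$, then $u$ is adjacent to neither $a$ nor $b$ (since $a,b$ have degree one and their sole neighbor is $c$), so the sum is $0$. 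In both cases $(\boldsymbol{L}_\mathcal{G}\boldsymbol{\psi})(u) = 0 = 1\cdot\boldsymbol{\psi}(u)$. Combining the three cases gives $\boldsymbol{L}_\mathcal{G}\boldsymbol{\psi} = \boldsymbol{\psi}$, and since $\boldsymbol{\psi}\neq\boldsymbol{0}$, it is an eigenvector of eigenvalue $1$.

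There is no real obstacle here; the only subtlety — and the one place I would be careful — is the bookkeeping around the common neighbor $c$: making sure the contributions $\boldsymbol{\psi}(a)$ and $\boldsymbol{\psi}(b)$ cancel exactly at $u=c$, and confirming that no other vertex sees a nonzero coordinate of $\boldsymbol{\psi}$ through an edge. This is exactly the same cancellation phenomenon exploited in the proof of Lemma 1 for the complete graph (where the all-but-one structure forces eigenvalue $N$), and I would phrase the argument to mirror that computation. The result is the building block needed to exhibit enough eigenvectors of $\boldsymbol{L}_{\mathcal{T}_N}$ to pin down its spectrum and thereby establish the $\sigma$-approximation claim for the star topology.
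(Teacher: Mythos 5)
Your proof is correct and is exactly the computation the paper intends: the paper's proof of this lemma simply says to multiply $\boldsymbol{L}_{\mathcal{G}}$ by $\boldsymbol{\psi}$ and verify vertex-by-vertex using \equref{eq:lg}, and your three-case check (at $a$, at $b$, and at all other vertices including the cancellation at $c$) is that verification written out in full. No gap and no difference in approach.
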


\begin{proof}[Proof of Lemma 2.] Just multiply $\boldsymbol{L}_{G}$ by $\boldsymbol{\psi}$, and check (using \eqref{eq:lg}) vertex-by-vertex that it equals $\boldsymbol{\psi}$.

As eigenvectors of different eigenvalues are orthogonal, this implies that $\boldsymbol{\psi}_a = \boldsymbol{\psi}_b$ for every eigenvector with eigenvalue different from 1.
\end{proof}

\begin{lemma} The laplacian of $\mathcal{T}_N$ has eigenvalue 0 with multiplicity 1, eigenvalue 1 with multiplicity $N-2$, and eigenvalue $N$ with multiplicity 1.
\end{lemma}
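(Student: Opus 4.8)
The plan is to exhibit, by eigenvalue, enough explicit eigenvectors of $\boldsymbol{L}_{\mathcal{T}_N}$ to fill out an orthogonal basis, and then to pin down the multiplicities by a dimension count. Write the vertex set of $\mathcal{T}_N$ as $\{u_c\}\cup\{v_1,\dots,v_{N-1}\}$, where $u_c$ is the central node and each $v_i$ is a leaf adjacent only to $u_c$, exactly the star spanning tree $\mathcal{T}^{\star}$ of Hypothesis~1. Since $\mathcal{T}_N$ is connected, $0$ is an eigenvalue of $\boldsymbol{L}_{\mathcal{T}_N}$ with multiplicity exactly $1$, its eigenspace being spanned by the all-$1$s vector; this is standard and also immediate from the quadratic form in \eqref{eq:lg}.

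Next I would obtain eigenvalue $1$ from Lemma~2: any two leaves $v_i,v_j$ both have degree one and are both connected to $u_c$, so $\boldsymbol{\delta}_{v_i}-\boldsymbol{\delta}_{v_j}$ is an eigenvector of $\boldsymbol{L}_{\mathcal{T}_N}$ with eigenvalue $1$. Ranging over all pairs of leaves, these vectors span precisely the subspace $W$ of vectors supported on $\{v_1,\dots,v_{N-1}\}$ whose coordinates sum to zero, which has dimension $N-2$; hence the $1$-eigenspace has dimension at least $N-2$. For eigenvalue $N$ I would guess and verify, in the spirit of the computation in the proof of Lemma~1: set $\boldsymbol{\psi}=(N-1)\boldsymbol{\delta}_{u_c}-\sum_{i=1}^{N-1}\boldsymbol{\delta}_{v_i}$, and check using \eqref{eq:lg} that the $u_c$-coordinate of $\boldsymbol{L}_{\mathcal{T}_N}\boldsymbol{\psi}$ is $(N-1)^2+(N-1)=N(N-1)$ while each leaf coordinate is $-(N-1)-1=-N$, so $\boldsymbol{L}_{\mathcal{T}_N}\boldsymbol{\psi}=N\boldsymbol{\psi}$. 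Thus $N$ is an eigenvalue with multiplicity at least $1$.

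Finally I would close by counting. We have produced eigenvalues $0$, $1$, $N$ with multiplicities at least $1$, $N-2$, $1$, whose sum is $N=\dim$; since (for $N\ge 3$) these three values are pairwise distinct, their eigenspaces are mutually orthogonal and independent, so none of the inequalities can be strict and the stated multiplicities are exact. As a sanity check one can also verify $\operatorname{tr}(\boldsymbol{L}_{\mathcal{T}_N})=(N-1)+(N-1)\cdot 1=2N-2=0+(N-2)\cdot 1+N$. There is no genuinely hard step here; the only points needing care are (i) checking that the vectors $\boldsymbol{\delta}_{v_i}-\boldsymbol{\delta}_{v_j}$ truly span an $(N-2)$-dimensional space and not a smaller one, and (ii) noting that $0$, $1$, $N$ are distinct so that the dimension count is tight, rather than leaving open that, say, the $1$-eigenspace is larger than $N-2$.
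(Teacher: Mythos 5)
Your proof is correct and follows essentially the same route as the paper: both obtain the $(N-2)$-dimensional eigenspace for eigenvalue $1$ from Lemma~2 and the simple eigenvalue $0$ from connectivity. The only divergence is the final step --- the paper infers the remaining eigenvalue $N$ from the trace $2N-2$, whereas you exhibit the eigenvector $(N-1)\boldsymbol{\delta}_{u_c}-\sum_{i}\boldsymbol{\delta}_{v_i}$ explicitly and close with an orthogonality/dimension count; both are valid, and your version has the minor bonus of producing the $N$-eigenvector concretely while making the exactness of the multiplicities slightly more explicit.
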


\begin{proof}[Proof of Lemma 3.]
Applying Lemma 2.1 to vertices $i$ and $i + 1$ for $2 \leq i < N$ , we find $N - 2$ linearly independent eigenvectors of the form $\boldsymbol{\delta_i-\delta_{i+1}}$, all with eigenvalue 1. As 0 is also an eigenvalue, only one eigenvalue remains to be determined.
Recall that the trace of a matrix equals both the sum of its diagonal entries and the sum of its eigenvalues. We know that the trace of $\boldsymbol{L}_{\mathcal{T}_n}$ is $2N - 2$, and we have identified $N - 1$ eigenvalues that sum to $N - 2$. So, the remaining eigenvalue must be $N$.
\end{proof}

From Lemma 1-3, we deduce:

\begin{lemma}
    $\mathcal{T}_N$ is an N-approximation of  $\mathcal{K}_N$.
\end{lemma}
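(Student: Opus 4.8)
The plan is to combine the eigenvalue computations from Lemmas 1 and 3 to exhibit an explicit value of $\sigma$ for which the two-sided Loewner bound $\boldsymbol{L}_{\mathcal{T}_N}/\sigma \preceq \boldsymbol{L}_{\mathcal{K}_N} \preceq \sigma \boldsymbol{L}_{\mathcal{T}_N}$ holds; the claim is that $\sigma = N$ works. First I would note that both Laplacians are symmetric positive semidefinite with kernel exactly the span of the all-ones vector $\mathbf{1}$ (this is where connectedness of both graphs is used), so it suffices to compare the quadratic forms $\boldsymbol{x}^\top \boldsymbol{L}_{\mathcal{K}_N}\boldsymbol{x}$ and $\boldsymbol{x}^\top \boldsymbol{L}_{\mathcal{T}_N}\boldsymbol{x}$ on the orthogonal complement $\mathbf{1}^\perp$.

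The key step is the chain of two inequalities on $\mathbf{1}^\perp$. By Lemma 1, on $\mathbf{1}^\perp$ we have $\boldsymbol{L}_{\mathcal{K}_N} = N\cdot I$ restricted to that subspace, so $\boldsymbol{x}^\top \boldsymbol{L}_{\mathcal{K}_N}\boldsymbol{x} = N\|\boldsymbol{x}\|^2$ for $\boldsymbol{x}\in\mathbf{1}^\perp$. By Lemma 3, the eigenvalues of $\boldsymbol{L}_{\mathcal{T}_N}$ on $\mathbf{1}^\perp$ lie in the interval $[1,N]$, hence $\|\boldsymbol{x}\|^2 \le \boldsymbol{x}^\top \boldsymbol{L}_{\mathcal{T}_N}\boldsymbol{x} \le N\|\boldsymbol{x}\|^2$. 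Chaining these: on $\mathbf{1}^\perp$,
\begin{equation}
\boldsymbol{x}^\top \boldsymbol{L}_{\mathcal{K}_N}\boldsymbol{x} = N\|\boldsymbol{x}\|^2 \ge \boldsymbol{x}^\top \boldsymbol{L}_{\mathcal{T}_N}\boldsymbol{x} \ge \frac{1}{N}\cdot N\|\boldsymbol{x}\|^2 = \frac{1}{N}\boldsymbol{x}^\top \boldsymbol{L}_{\mathcal{K}_N}\boldsymbol{x},
\end{equation}
which gives $\frac{1}{N}\boldsymbol{L}_{\mathcal{K}_N} \preceq \boldsymbol{L}_{\mathcal{T}_N} \preceq \boldsymbol{L}_{\mathcal{K}_N}$ on $\mathbf{1}^\perp$. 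Since both forms vanish on the kernel $\mathrm{span}(\mathbf{1})$, the inequalities extend to all of $\mathbb{R}^N$. Rearranging yields $\boldsymbol{L}_{\mathcal{T}_N} \preceq \boldsymbol{L}_{\mathcal{K}_N} \preceq N\,\boldsymbol{L}_{\mathcal{T}_N}$, and since trivially $\boldsymbol{L}_{\mathcal{T}_N}/N \preceq \boldsymbol{L}_{\mathcal{T}_N} \preceq \boldsymbol{L}_{\mathcal{K}_N}$, we obtain $\boldsymbol{L}_{\mathcal{T}_N}/N \preceq \boldsymbol{L}_{\mathcal{K}_N} \preceq N\,\boldsymbol{L}_{\mathcal{T}_N}$, i.e.\ $\sigma = N$ by the definition of $\sigma$-spectral similarity, so $\mathcal{T}_N$ is an $N$-approximation of $\mathcal{K}_N$.

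The main obstacle is not the algebra but making sure the comparison of quadratic forms is only asserted where it is actually true: the naive inequalities between eigenvalues only give a direct comparison on the common invariant subspace $\mathbf{1}^\perp$, and one must argue carefully that the shared one-dimensional kernel does not obstruct extending the Loewner order to the full space. Concretely, I would decompose any $\boldsymbol{x}\in\mathbb{R}^N$ as $\boldsymbol{x} = c\mathbf{1} + \boldsymbol{y}$ with $\boldsymbol{y}\in\mathbf{1}^\perp$, observe $\boldsymbol{L}_{\mathcal{K}_N}\boldsymbol{x} = \boldsymbol{L}_{\mathcal{K}_N}\boldsymbol{y}$ and $\boldsymbol{L}_{\mathcal{T}_N}\boldsymbol{x} = \boldsymbol{L}_{\mathcal{T}_N}\boldsymbol{y}$, and reduce to the subspace estimate. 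A secondary point worth stating explicitly is why $\mathbf{1}$ is an eigenvector of \emph{both} Laplacians (every Laplacian annihilates $\mathbf{1}$) and why $N$ is the correct invariant subspace for Lemma 1's eigenvalue-$N$ eigenspace — these are exactly the facts Lemmas 1 and 3 supply, so the proof is essentially a one-line assembly once the subspace bookkeeping is in place.
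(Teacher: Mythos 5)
Your proof is correct, and it reaches the same conclusion from the same ingredients (the spectra computed in Lemmas 1 and 3), but it closes the argument differently from the paper, and in a way that is strictly more rigorous. The paper's proof simply lists the sorted eigenvalues of $\boldsymbol{L}_{\mathcal{K}_N}$ and $\boldsymbol{L}_{\mathcal{T}_N}$, checks $u_i/N \le \lambda_i \le N u_i$ pair by pair, and declares the Loewner inequalities $\boldsymbol{L}_{\mathcal{T}_N}/N \preceq \boldsymbol{L}_{\mathcal{K}_N} \preceq N\boldsymbol{L}_{\mathcal{T}_N}$. For general symmetric matrices, an entrywise comparison of sorted eigenvalue lists does \emph{not} imply a Loewner ordering (the converse of the Courant--Fischer consequence the paper cites), so the paper's step is only justified because the two Laplacians happen to share a common eigenbasis --- a fact the paper never states. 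Your quadratic-form argument supplies exactly the missing justification: on $\mathbf{1}^\perp$ the complete-graph Laplacian acts as $N\cdot I$, so \emph{every} vector of $\mathbf{1}^\perp$ is an eigenvector of $\boldsymbol{L}_{\mathcal{K}_N}$, and the comparison $\|\boldsymbol{x}\|^2 \le \boldsymbol{x}^\top\boldsymbol{L}_{\mathcal{T}_N}\boldsymbol{x} \le N\|\boldsymbol{x}\|^2$ from Lemma 3 then chains directly against $\boldsymbol{x}^\top\boldsymbol{L}_{\mathcal{K}_N}\boldsymbol{x} = N\|\boldsymbol{x}\|^2$, with the shared kernel $\mathrm{span}(\mathbf{1})$ handled by the decomposition $\boldsymbol{x} = c\mathbf{1} + \boldsymbol{y}$. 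What your approach buys is a proof of the genuine Loewner statement rather than one that implicitly relies on an unstated simultaneous diagonalizability; what the paper's approach buys is brevity. One small remark: your bound is actually sharper than needed --- you establish $\boldsymbol{L}_{\mathcal{T}_N} \preceq \boldsymbol{L}_{\mathcal{K}_N} \preceq N\boldsymbol{L}_{\mathcal{T}_N}$, from which the two-sided $\sigma = N$ condition follows by the trivial relaxation $\boldsymbol{L}_{\mathcal{T}_N}/N \preceq \boldsymbol{L}_{\mathcal{T}_N}$, and you state this correctly.
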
 
\begin{proof}[Proof of Lemma 5.]
    Assume $\lambda_1,\ldots, \lambda_N$ are the eigenvalues of  The Laplacian of $\mathcal{K}_N$, and $u_1,\ldots, u_N$ are the eigenvalues of the Laplacian of $\mathcal{T}_N$. For i =1, $\lambda_i = N$, $u_i = N$, satisfying $u_i/N \leq \lambda\_i \leq N u_i$. For $2\leq i \leq N-1$, $\lambda_i = N$, $u_i =1$, satisfying $u_i/N \leq \lambda_i \leq N u_i$. For i =N, $\lambda_i = 0$, $u_i = 0$, satisfying $u_i/N \leq \lambda_i \leq N u_i$. Thus, for all i, $u_i/N \leq \lambda_i \leq N u_i$, i.e., $L_{\mathcal{T}_N}/ N \preceq L_{\mathcal{K}_N} \preceq N L_{\mathcal{T}_N}$.
\end{proof}

In conclusion, we have theoretically proven that star topology $\mathcal{T}_N$ is a good approximation of $\mathcal{K}_N$, and therefore, can learn spatiotemporal dependencies effectively.




\end{document}